\definecolor{citecolor}{HTML}{0071bc}
\definecolor{citered}{HTML}{8b0000}
\def\eqref#1{equation~\ref{#1}}
\def\1{\bm{1}}
\DeclareMathAlphabet{\mathsfit}{\encodingdefault}{\sfdefault}{m}{sl}
\SetMathAlphabet{\mathsfit}{bold}{\encodingdefault}{\sfdefault}{bx}{n}
\newtheorem{theorem}{Theorem}
\newcommand{\model}[1]{ProteinDT}
\newcommand{\dataset}[1]{SwissProtCLAP}
\newcommand{\ProteinCLAP}[1]{ProteinCLAP}
\newcommand{\ProteinFacilitator}[1]{ProteinFacilitator}
\newcommand{\ProteinSDE}[1]{ProteinDiff}
\definecolor{ForestGreen}{RGB}{34,139,34}
\renewcommand{\thefootnote}{\fnsymbol{footnote}}
\title{Valid Property-\textbf{E}nhanced \textbf{C}ontrastive Learning for \textbf{T}argeted \textbf{O}ptimization \& \textbf{R}esampling for Novel Drug Design}
\author[1,$*$,$\dagger$]{Amartya Banerjee \orcidlink{0000-0002-7998-3390}}
\author[2,5,$\dagger$]{Somnath Kar \orcidlink{0009-0004-8702-4335}}
\author[3,4,$\dagger$]{Anirban Pal
\orcidlink{0009-0001-7518-7661}}
\author[4,$*$]{Debabrata Maiti \orcidlink{0000-0001-8353-1306}}
\affil[1]{Department of Computer Science, University of North Carolina at Chapel Hill, NC 27599, USA}
\affil[2]{Radiation Medicine Centre, Bhabha Atomic Research Centre, Parel, Mumbai 400012, India}
\affil[3]{IITB-Monash Research Academy, IIT Bombay, Powai, Mumbai 400076, India}
\affil[4]{Department of Chemistry, Indian Institute of Technology Bombay, Powai, Mumbai 400076, India}
\affil[5]{Homi Bhabha National Institute, Anushaktinagar, Mumbai 400094, India}
\begin{abstract} \\
Efficiently steering generative models toward pharmacologically relevant regions of chemical space remains a major obstacle in molecular drug discovery in the low-data regime. We present \textbf{VECTOR+:} \textbf{V}alid‑property‑\textbf{E}nhanced \textbf{C}ontrastive Learning for \textbf{T}argeted \textbf{O}ptimization and \textbf{R}esampling, a contrastive‑learning and latent‑sampling framework that couples property‑guided representation learning with controllable molecule generation. VECTOR+ can be efficiently applied to both regression and classification tasks and enables interpretable, data-efficient exploration of functional chemical space in low-data setting. We demonstrate the method on two distinct datasets: a curated PD-L1 inhibitor set (296 compounds with experimentally determined IC\textsubscript{50} values) and an available receptor kinase inhibitor set (2,056 molecules categorized by binding mode). Despite limited training data, VECTOR+ successfully generates novel, synthetically tractable candidates. Against PD-L1 (PDB ID: 5J89), 100 of 8,374 generated molecules surpass a docking threshold of -$15.0$ kcal/mol, with the top candidate scoring -$17.6$ kcal/mol outperforming known inhibitors (best docking score: -$15.4$ kcal/mol). The best-performing molecules retain the conserved biphenyl pharmacophore while introducing novel chemical motifs. 250 ns molecular dynamics simulations confirm their binding stability (ligand RMSD < $2.5$ \r{A}).  VECTOR+ generalizes effectively to kinase inhibitors as well, producing compounds with superior docking scores compared to established drugs such as brigatinib and sorafenib. Benchmarking against state-of-the-art generative models, such as JT-VAE and MolGPT, across multiple metrics (docking score, novelty, uniqueness, Tanimoto similarity) demonstrates the superior performance of our method. Together, these results position our work as a robust and extensible approach for property-conditioned molecular design in low-data regime, bridging contrastive learning and generative modeling for reproducible, AI-accelerated discovery. 
\end{abstract}
\begin{document}
\flushbottom
\maketitle

\footnotetext[1]{Corresponding authors:
  \href{mailto:amartya1@cs.unc.edu}{amartya1@cs.unc.edu}; 
  \href{mailto:dmaiti@chem.iitb.ac.in}{dmaiti@chem.iitb.ac.in}}
\footnotetext[2]{These authors contributed equally.}

\renewcommand{\thefootnote}{\arabic{footnote}}
\setcounter{footnote}{0}

\thispagestyle{empty}

\section*{Introduction}

The process of discovering new therapeutics remains one of the most expensive, time-consuming, and uncertain endeavors in modern science, with costs exceeding US\$2.8 billion and a duration of over 12 years for the development of a novel drug \cite{wouters2020estimated}. Despite decades of progress in chemistry, biology, and screening technologies, the core workflow for small-molecule drug discovery is still dominated by empirical and iterative exploration. Machine learning (ML) has emerged as a transformative tool in modern drug discovery, revolutionizing traditional approaches through data-driven decision-making \cite{dara2022machine,chen2018rise,gupta2021artificial}. The field has evolved significantly from high-throughput screening and Computer-Aided Drug Design (CADD) to the integration of artificial intelligence (AI) and ML technologies. Today, researchers across the globe are leveraging ML models at various stages of the drug discovery pipeline, including target identification and validation \cite{biswas2020artificial}, virtual screening, hit and lead discovery, lead optimization, quantitative structure–activity relationship (QSAR) modeling \cite{pitt2025real}, and predictive toxicology \cite{yang2023application,sinha2023review,tran2023artificial}. 

More recently, the advent of generative AI has further accelerated the progress of de novo molecular design. These approaches aim to algorithmically generate novel compounds with optimized pharmacological and physicochemical properties, thereby reducing discovery timelines and resource requirements \cite{gangwal2024unlocking}. Among the most prominent architectures explored are variational autoencoders (VAEs) \cite{kingma2013auto}, generative adversarial networks (GANs) \cite{goodfellow2014generative}, and generative pretrained transformers (GPTs) \cite{haroon2023generative}, which have demonstrated promising capabilities for property-guided molecular generation and scaffold innovation.

Despite recent progress in generative AI for molecular design, the application of these models to domain-specific problems in chemistry and biology remains highly constrained by limited data availability \cite{dou2023machine}. Most real-world challenges in these fields involve only a few hundred to a few thousand data points. In contrast, current generative frameworks such as VAEs, GANs, and others are typically trained on massive, general-purpose databases like ZINC, ChEMBL, or PubChem, which contain millions of diverse compounds across a wide spectrum of chemical and biological applications \cite{parvatikar2023artificial,tingle2023zinc}. Consequently, when applied to a narrow, low-data regimes, these models often perform poorly, failing to capture domain-specific structure–function relationships \cite{van2024deep}. This highlights a critical need for new generative approaches that are explicitly designed to operate effectively in data-scarce, problem-specific settings. One such application space where this need is particularly acute is cancer-targeted drug discovery, which is a longstanding priority for both academic researchers and pharmaceutical industries. 

Cancer is the second leading cause of death after cardiovascular diseases \cite{zaorsky2017causes,bray2021ever}. New cancer cases and cancer-related deaths are increasing at a steep rate with each passing year.  Chemotherapy remains the widely used modality in the treatment of cancer, along with surgery, immunotherapy, and radiation therapy \cite{zugazagoitia2016current,wang2018combining}. Although chemotherapeutic drugs are used around the world in cancer therapy, due to their non-selectivity and specificity, they result in severe side effects \cite{schirrmacher2019chemotherapy,van2022chemotherapy}. 

This underscores the critical importance of designing target-specific anticancer therapies with improved efficacy and safety profiles. Among the most promising targets are the Programmed Death-Ligand 1 (PD-L1) and protein kinase systems, both of which exhibit aberrant expression across a wide spectrum of malignancies \cite{wang2016pd,yu2020pd,mahoney2015combination,yi2021regulation,yamaoka2018receptor,tomuleasa2024therapeutic,hsu2016role}. Tumor cells evade immune surveillance by exploiting checkpoint pathways, notably the PD-1/PD-L1 axis, where PD-L1 binding to PD-1 leads to T cell dysfunction and tumor progression. Blocking this interaction restores T cell activity and enhances antitumor immunity \cite{lin2024regulatory}. The kinase receptor system, on the other hand, particularly receptor tyrosine kinases (RTKs), contributes to tumor growth by transmitting signals that drive cell proliferation, survival, migration, and angiogenesis \cite{du2018mechanisms}. Hence, the design of inhibitors against these targets helps to inhibit cancer progression and metastasis.

In this work, we present a generative modeling framework tailored for the design of new small-molecule inhibitors targeting PD-L1 and kinase receptors. Our full code and data is available at \url{https://github.com/amartya21/vector-drug-design.git}.

\paragraph{Our major contributions are as follows:}
\begin{itemize}

    \item We propose a new method VECTOR+ which addresses the critical challenge of data scarcity by structuring the chemical latent space according to biological function. This enables meaningful representation learning from limited datasets, a common bottleneck in drug discovery. Furthermore, we ground our approach in a classical result by leveraging a fundamental principle of information theory that moment-matching Gaussians uniquely minimize forward KL divergence. This principle provides a strong rationale for our use of GMM-based sampling, validating VECTOR+ as a mathematically principled framework for latent space modeling.

    \item We validate our method on PD-L1 and Kinase inhibitor datasets. This leads to a high yield of novel candidates and has produced a PD-L1 inhibitor with a significantly high docking score of $-17.6$ kcal/mol, surpassing known compounds. For kinase inhibitors, starting with just 47 allosteric inhibitor training molecules, we successfully generated 2,500 novel, drug-like compounds with high validity and uniqueness.

    \item The proposed framework is broadly generalizable and can be readily extended to accelerate drug discovery across diverse therapeutic targets and molecular generation domains. While the method demonstrates strong performance in extremely low-data regimes, the framework is equally applicable when larger training datasets are available, where we expect it to further benefit from richer class structure and more stable surrogate fitting.
    
    \item We introduce a curated dataset of 296 small-molecule PD-L1 inhibitors with experimentally determined IC\textsubscript{50} values, providing a valuable resource to accelerate research for this critical immuno-oncology task for future model development in the low-data regime.
    
\end{itemize}

\subsection*{Related work}

Advancements in AI/ML, particularly in deep generative modeling strategies, have led to a paradigm shift in de novo drug design \cite{mak2024artificial}. Various deep learning architectures such as recurrent neural networks (RNN) \cite{segler2018generating,bjerrum2017molecular}, reinforcement learning (RL) \cite{olivecrona2017molecular}, VAE \cite{kingma2013auto,gomez2018automatic}, GAN \cite{goodfellow2014generative,rathod2023unlocking}, and others have already been applied to generate novel molecules with desired properties. Both the text-based molecular representation (SMILES) and graph-based representation have been explored in deep generative model-based de novo drug design \cite{deng2022artificial}.

Conditional VAE (CVAEs) enables researchers to directly sample molecules with desired drug-like properties like MW, LogP, HBD, HBA, and TPSA \cite{lim2018molecular,kang2018conditional}. Junction tree variational autoencoder (JT-VAE) generates molecular graphs in two phases, by first generating a tree-structured scaffold over chemical sub-structures, and then combining them into a molecule with a graph message passing network \cite{jin2018junction}. Liao and co-workers introduced Sc2Mol framework, a two-step design process combining VAEs for scaffold generation and transformers for decoration, eliminating the need for predefined motifs \cite{liao2023sc2mol}. More recently, Liu et al. reported SmilesGEN, a dual-channel VAE architecture designed to incorporate drug-induced perturbations and transcriptional responses, thereby enabling the generation of molecules with biologically informed profiles \cite{liu2025phenotypic}.

In parallel, GAN-based models such as RANC (Reinforced Adversarial Neural Computer)  combines GAN and RL employed memory-augmented neural networks to produce drug-like and structurally unique compounds \cite{putin2018reinforced}.  Similarly, generative tensorial reinforcement learning (GENTRL) was used to find an efficient DDR1 kinase inhibitor \cite{zhavoronkov2019deep}. This transformer-based generative model, fine-tuned through reinforcement learning and transfer learning, has demonstrated notable improvements in molecular design, achieving higher synthetic feasibility, structural validity, and target-specific relevance. Recently, an RL-based architecture, RuSH (Reinforcement Learning for Unconstrained Scaffold Hopping), has been utilized for designing novel drug scaffolds. RuSH guides molecular generation toward creating compounds that maintain high three-dimensional and pharmacophore similarity to a reference molecule, while reducing scaffold similarity \cite{rossen2024scaffold}. 

Using a conditional recurrent neural network (cRNN) novel molecules were generated with desired properties. Selected molecular descriptors are embedded into the initial memory state of the network to guide generation. The architecture relies solely on an RNN-based decoder, making it conceptually simpler than a full conditional autoencoder \cite{kotsias2020direct}. Transformer architectures soon outpaced RNNs, with encoder-decoder models achieving 98.2\% chemical validity and greater diversity \cite{yang2021transformer}. REINVENT 4 consolidated these advances into a modern framework combining transformers, RNNs, curriculum learning, and optimization strategies for drug discovery \cite{loeffler2024reinvent}.

Recent advances in large language models (LLMs) have significantly influenced molecular generation strategies by applying natural language processing techniques to chemical representations \cite{chakraborty2023artificial,lee2025rag,liu2024drugagent}. MolGPT follows this approach as a transformer-decoder model trained on SMILES strings using a next-token prediction task, allowing it to generate drug-like molecules in an autoregressive manner \cite{bagal2021molgpt}. The model supports conditional generation, allowing control of desired molecular scaffolds and physicochemical properties (e.g., molecular weight, logP) by conditioning on scaffold SMILES and property vectors.

Simultaneously, contrastive learning has emerged as a powerful technique in drug discovery and cheminformatics \cite{gao2025revealing}. For instance, FragNet integrates contrastive objectives with transformer models to enhance the interpretability of molecular latent spaces \cite{shrivastava2021fragnet}. Similarly, CONSMI applied contrastive learning directly on SMILES, boosting generation quality through augmented molecular views \cite{qian2024consmi}.

Diffusion-based models like GeoDiff are also employed for molecular generation. The model treats each atom as a particle and learns to directly reverse the diffusion process (i.e., transforming from a noise distribution to stable conformations) as a Markov chain \cite{xu2022geodiff}. Most recently, structure-based models like CMD-GEN (Coarse-grained and Multi-dimensional Data-driven molecular generation) incorporated 3D protein-ligand data using pharmacophore-guided sampling and diffusion models. This hierarchical approach enabled the generation of conformationally stable bioactive molecules in scarce data settings \cite{zou2025structure}.
Complementary to molecule-centric approaches, recent work has introduced protein inverse problem formulations that treat structure recovery and design as generative modeling tasks directly on proteins using diffusion baseed methods \cite{levy2024solving, banerjee2025adampnp}. These efforts highlight the potential of inverse-problem-guided generative frameworks to unify structural biology and molecular design.

Despite the remarkable progress in generative modeling for de novo drug design, several intrinsic limitations remain across these methodologies. Most deep learning models require large datasets to train effectively (\cref{tab: large dataset table}). This includes VAEs, GANs, transformers, and diffusion models. In many real-world drug discovery scenarios, only a small number of active compounds are available. These low-data settings make it difficult to apply standard generative approaches. GANs often suffer from instability during training. They can collapse to a narrow range of outputs and fail to explore diverse chemical space. RL-based methods are sample-inefficient and highly sensitive to the design of the reward function. SMILES-based models can generate invalid molecules due to small changes in syntax. This makes them fragile and hard to control. Graph-based models provide more chemical structure awareness. But they are computationally expensive, especially for large molecules or hierarchical designs. Diffusion models, while powerful in generating high-quality and diverse molecules, require long training times and careful calibration of noise schedules. Their iterative denoising process is computationally intensive and makes them less suitable for rapid, iterative design cycles, especially in low-resource settings. These limitations underscore the need for more efficient, interpretable, and low-data-compatible generative frameworks that can balance chemical validity, diversity, and computational feasibility in real-world drug discovery.

\subsection*{This work}

In this study, we introduce a novel generative framework for the design of bioactive molecules targeting PD-L1 and kinase receptors. While established datasets of kinase inhibitors are publicly available and utilized for model development, no comprehensive database exists for PD-L1 inhibitors. To address this gap, we curated a novel dataset comprising reported PD-L1 inhibitors annotated with experimentally determined IC\textsubscript{50} values, enabling data-driven modeling of PD-L1-targeted chemical space. Our approach integrates a transformer-based encoder to extract rich molecular embeddings from SMILES representations, followed by a contrastive learning strategy that structures the latent space into chemically meaningful clusters. A Gaussian Mixture Model (GMM) is subsequently employed to sample from specific clusters associated with desirable pharmacological properties.  This unified pipeline enables efficient exploration and generation of target-specific, structurally novel molecules with optimized drug-like properties.To assess the quality of the generated molecules, we conducted a series of in silico evaluations, including molecular docking and molecular dynamics (MD) simulations. Furthermore, the performance of our framework was benchmarked against state-of-the-art generative models such as JT-VAE and MolGPT, using key metrics including docking scores, molecular uniqueness, novelty, and Tanimoto similarity. Collectively, this approach offers a scalable and data-efficient solution for target-driven molecule generation, particularly in low-data therapeutic domains.

\begin{figure}[!ht]
    \centering
    \includegraphics[width=1.0\linewidth]{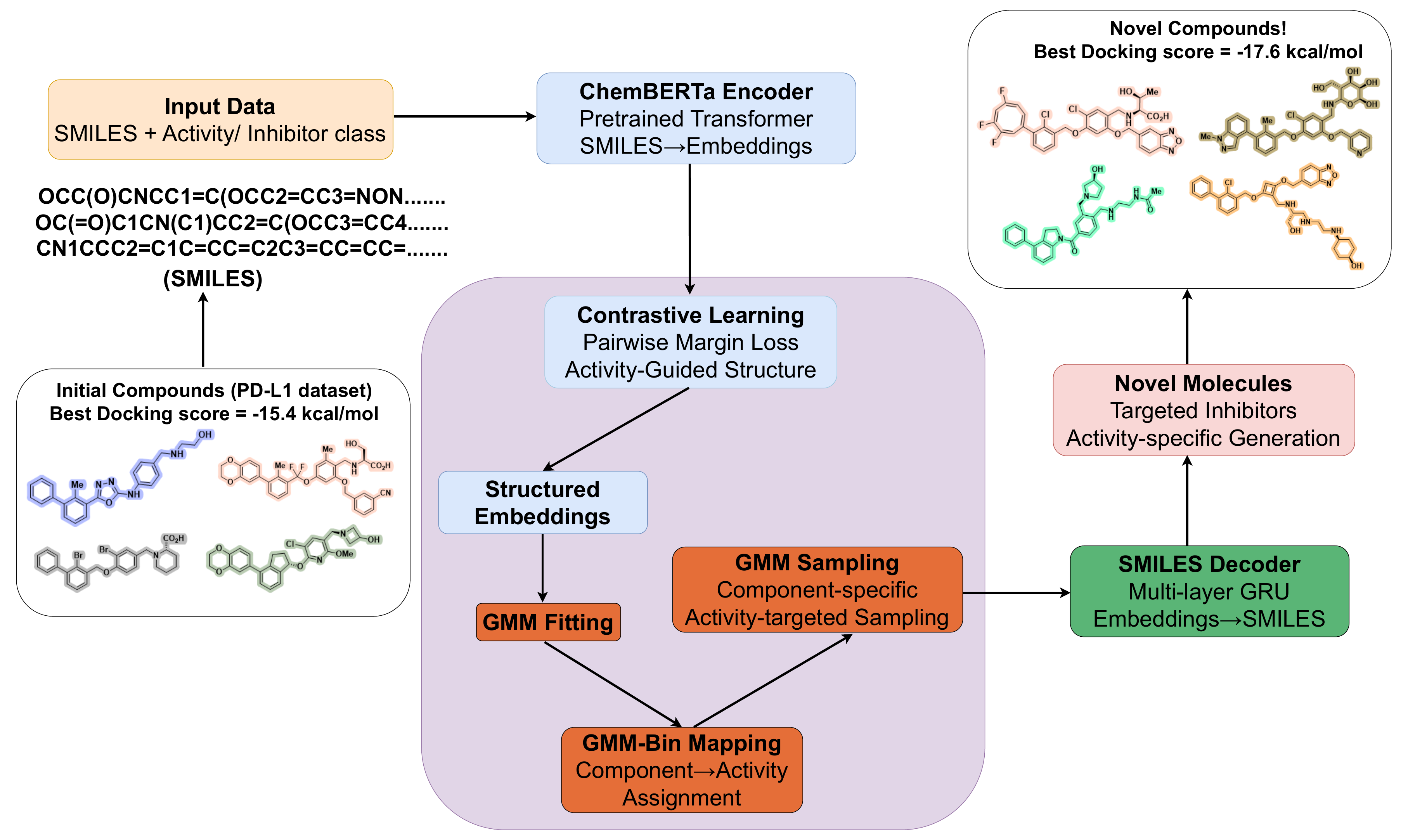}
    \caption{Workflow for GMM-based sampling and SMILES generation. Contrastive embeddings from ChemBERTa are clustered using a GMM. vectors are sampled per cluster, decoded into SMILES, filtered using RDKit, and saved for downstream applications.}
    \label{fig:gmm_workflow}
\end{figure}

\section*{Methodology}
The VECTOR+ framework is a pipeline designed to learn property-centric molecular representations and leverage them for targeted de novo generation. The methodology consists of three main stages: (1) structuring a latent space using property-guided contrastive representation learning; (2) modeling the learned chemical space with class-conditional probabilistic models; and (3) sampling from these models to generate novel molecules, with an optional reward-guided refinement step.

\subsection*{Property-Guided Contrastive Representation Learning}
Let a dataset be defined as $\mathcal{D} = \{(s_i, b_i)\}_{i=1}^n$, where $s_i \in \mathcal{S}$ is the SMILES representation of a molecule and $b_i \in \{1, \dots, C\}$ is its corresponding discrete property class label. Our primary goal is to learn an embedding function $M_\theta: \mathcal{S} \to \mathbb{R}^d$ that maps molecules into a $d$-dimensional latent space $\mathcal{Z}$ where geometric proximity correlates with property similarity.

The encoder $M_\theta$ is a composite model, $M_\theta = h_\theta \circ f_{\text{BERT}}$, where $f_{\text{BERT}}$ is a pre-trained ChemBERTa transformer and $h_\theta$ is a task-specific, non-linear projection head. For each molecule $s_i$, the encoder produces a latent vector $\mathbf{z}_i = M_\theta(s_i)$.

\paragraph{Architecture and Embedding Strategy.}
We used ChemBERTa, a transformer model trained on large sets of SMILES data, to encode each molecule \cite{chithrananda2020chemberta}. First, the SMILES strings were tokenized and passed through ChemBERTa. Then, we extracted the output from the [CLS] token to get a 768-dimensional vector that represents the molecule.

To prepare the embeddings for contrastive learning, we added a projection head. It had two fully connected layers with ELU activation. This head converted the raw embeddings into a new space where similar molecules were placed closer together, and different ones were pushed apart. These refined embeddings were then used for clustering and molecule generation.

\paragraph{Datasets and Binning Strategy:}
To use contrastive learning, the PD-L1 dataset was grouped into discrete bins based on activity values, and the kinase dataset was divided into four classes according to their modes of action:

\begin{itemize}
    \item \textbf{PD-L1 Task:} This dataset contains 296 molecules with experimentally measured IC\textsubscript{50} values. We applied a $\log (\mathrm{IC}_{50}$) transformation and then split the data at the median. Molecules with $\log (\mathrm{IC}_{50}$) values less than or equal to the median were labeled as ``high activity'', and the rest as ``low activity''.
    
    \item \textbf{Kinase Task:} This dataset includes kinase inhibitors categorized into four known activity classes. These labels were directly used as input for contrastive learning.
\end{itemize}

Even though the tasks differ, the same contrastive learning framework was applied to both datasets.

\paragraph{Contrastive Loss Function:}
The goal of the contrastive loss is to bring similar molecules closer in the embedding space and push dissimilar ones apart. 
Over the years, various contrastive loss functions such as InfoNCE, NT-Xent, and triplet loss have been explored for representation learning. In our case, we employed a pairwise contrastive loss to structure the latent space, ensuring that similar pairs are embedded closer together than dissimilar ones. For any pair of molecules $(s_i, s_j)$, we define a binary similarity label $y_{ij} = \mathbbm{1}[b_i = b_j]$, where $\mathbbm{1}[\cdot]$ is the indicator function. The objective is to minimize the distance between embeddings of molecules from the same class (positive pairs) while enforcing a separation margin $m > 0$ for molecules from different classes (negative pairs). The contrastive loss for a mini-batch $\mathcal{B}$ of size $B$ is formulated as:
\begin{equation}
\mathcal{L}_{\text{contrast}}(\theta; \mathcal{B}) = \frac{1}{|\{(i,j): i \neq j\}|} \sum_{i \neq j} \left( y_{ij} \|\hat{\mathbf{z}}_i - \hat{\mathbf{z}}_j\|_p^2 + (1 - y_{ij}) \left[\max(0, m - \|\hat{\mathbf{z}}_i - \hat{\mathbf{z}}_j\|_p)\right]^2 \right)
\label{eq:contrastive_loss}
\end{equation}
where we use the $L_p$-norm with $p=1$. The encoder $M_\theta$ is trained by minimizing this loss over the entire dataset.

Here, $m = 1.0$ is the margin that defines how far apart different-class embeddings should be. We computed the final loss by averaging over the upper triangle of the distance matrix to avoid counting the same pairs twice.

\subsection*{Latent Space Modeling and Targeted Generation}

\paragraph{Probabilistic Clustering and Cluster-to-Class Alignment:}
Upon convergence of the contrastive training, the latent space $\mathcal{Z}$ is organized into distinct, property-aligned regions. To formalize this structure, we model the entire embedding set $Z = \{\mathbf{z}_i\}_{i=1}^n$ with a GMM. We hypothesize that the data is generated from a mixture of $K$ Gaussian components, where we set $K=C$ to match the number of property classes. The probability density of a point $\mathbf{z}$ is given by:

\begin{equation}
p(\mathbf{z}|\Theta) = \sum_{k=1}^{K}\pi_k\mathcal{N}(\mathbf{z};\boldsymbol{\mu}_k,\boldsymbol{\Sigma}_k)
\label{eq:gmm_density}
\end{equation}

where the parameters $\Theta = \{\pi_k, \boldsymbol{\mu}_k, \boldsymbol{\Sigma}_k\}_{k=1}^K$ represent the mixture weights, means, and covariance matrices. These parameters are estimated by maximizing the data log-likelihood using the Expectation-Maximization (EM) algorithm.
The EM algorithm iteratively alternates between two steps until convergence:
\begin{itemize}
    \item \textbf{Expectation (E-step):} In this step, the expected cluster assignments for each data point given the current model parameters $\Theta$ is computed. This is done by calculating the posterior probability, or \textit{responsibility}, $r_{ik}$ that component $k$ is responsible for generating data point $\mathbf{z}_i$:
    \begin{equation}
        r_{ik} = \frac{\pi_k \mathcal{N}(\mathbf{z}_i | \boldsymbol{\mu}_k, \boldsymbol{\Sigma}_k)}{\sum_{j=1}^K \pi_j \mathcal{N}(\mathbf{z}_i | \boldsymbol{\mu}_j, \boldsymbol{\Sigma}_j)}
        \label{eq:responsibility}
    \end{equation}

    \item \textbf{Maximization (M-step):} In this step, the model parameters $\Theta$ is updated to maximize the log-likelihood, using the responsibilities computed in the E-step as soft weights. Let $N_k = \sum_{i=1}^n r_{ik}$ be the effective number of points assigned to component $k$. The parameters are updated as follows:
    \begin{align}
        \boldsymbol{\mu}_k^{\text{new}} &= \frac{1}{N_k} \sum_{i=1}^n r_{ik} \mathbf{z}_i \label{eq:mu_update} \\
        \boldsymbol{\Sigma}_k^{\text{new}} &= \frac{1}{N_k} \sum_{i=1}^n r_{ik} (\mathbf{z}_i - \boldsymbol{\mu}_k^{\text{new}})(\mathbf{z}_i - \boldsymbol{\mu}_k^{\text{new}})^\top \label{eq:sigma_update} \\
        \pi_k^{\text{new}} &= \frac{N_k}{n} \label{eq:pi_update}
    \end{align}
\end{itemize}

These two steps are repeated until the parameters converge. This clustering provides a set of $K$ probabilistic components. To utilize them for property-guided generation, we must establish a correspondence between these components and the $C$ ground-truth property labels. We achieve this by solving a linear assignment problem. First, we compute the posterior probability, or responsibility, that component $k$ is responsible for generating data point $\mathbf{z}_i$: $r_{ik} = p(k|\mathbf{z}_i, \Theta)$. We then construct a $K \times C$ affinity matrix $\Phi$, where each element $\Phi_{kc}$ quantifies the total responsibility of component $k$ for all data points belonging to the true class $c$:

\begin{equation}
\Phi_{kc} = \sum_{i: b_i=c} r_{ik}
\label{eq:affinity_matrix}
\end{equation}

We then seek the optimal permutation $\gamma^*: \{1, \dots, K\} \to \{1, \dots, C\}$ that maximizes the total affinity, solving the maximum weight bipartite matching problem:

\begin{equation}
\gamma^* = \arg\max_{\gamma\in\mathcal{P}_K}\sum_{k=1}^K\Phi_{k,\gamma(k)}
\label{eq:optimal_mapping}
\end{equation}
where $\mathcal{P}_K$ is the set of all permutations of $\{1, \dots, K\}$. This problem is solved efficiently using the Hungarian algorithm \cite{munkres1957algorithms}. The resulting mapping $\gamma^*$ provides an appropriate alignment between the unsupervised GMM components and the desired property classes.

\subsection*{Justification for the GMM-based Surrogate}
\label{sec:gmm_optimality}

\paragraph{Methodological Rationale:}
Our choice to model the latent space utilizes GMM (unsupervised approach), followed by a component-to-class matching step. While a simpler, supervised approach of fitting a Gaussian to each class's labeled data is possible, the GMM framework provides a complete probabilistic model of the entire latent space density, $p(z)$, capturing not only the class-conditional distributions but also their relative prevalence via mixture weights. 
Furthermore, its reliance on soft assignments in the EM algorithm provides robustness for ambiguous data points near cluster boundaries. This decouples the task of density estimation from class labeling, creating a more flexible framework. The unsupervised approach is favorable in the presence of noisy labels.
The component-to-class matching is a direct corollary of the preceding contrastive learning stage. The property-guided contrastive loss (\Cref{eq:contrastive_loss}) is specifically designed to engineer a latent space where embeddings from different classes form well-separated, unimodal clusters (as observed in Figure~\ref{fig:both}). The GMM is therefore applied to a space already structured for this task, making the alignment of its components with the true classes highly probable. The Hungarian algorithm \cite{munkres1957algorithms} is then simply the optimal method for solving the resulting one-to-one assignment problem.

\paragraph{Theoretical Justification:}
Here, we ground our use of a GMM as a surrogate for the true class-conditional latent distributions in a fundamental principle from information theory. Let $P_c$ be the unknown distribution of latent embeddings $Z=M_\theta(s)$ for a given class $c$. We show that, within the family of all Gaussian distributions, the one that matches the moments of $P_c$ is the unique minimizer of the forward Kullback-Leibler (KL) divergence. This makes our GMM-based sampling approach a natural choice.

\paragraph{Assumptions:}
For each class $c$, we assume the latent distribution $P_c$ has a finite mean $\boldsymbol{\mu}_c=\mathbb{E}_{P_c}[Z]$ and a finite, \emph{positive-definite} covariance $\boldsymbol{\Sigma}_c=\mathrm{Cov}_{P_c}(Z)\in\mathbb{S}_{++}^d$. All Gaussian covariances in the surrogate family are constrained to the SPD cone $\boldsymbol{\Lambda}\in\mathbb{S}_{++}^d$.

\begin{theorem}[I-projection onto the Gaussian family \& per-class surrogate optimality]
\label{thm:perclass_gaussian}
Let $P$ be any distribution on $\mathbb{R}^d$ with finite mean $\boldsymbol{\mu}$ and covariance $\boldsymbol{\Sigma} \in\mathbb{S}_{++}^d$. Among all Gaussians $q_{\boldsymbol{\nu},\boldsymbol{\Lambda}}=\mathcal{N}(\boldsymbol{\nu},\boldsymbol{\Lambda})$ with $\boldsymbol{\Lambda}\in\mathbb{S}_{++}^d$, the cross-entropy
$\mathrm{CE}(P\Vert q_{\boldsymbol{\nu},\boldsymbol{\Lambda}})\;=\;-\mathbb{E}_{X\sim P}\!\left[\log q_{\boldsymbol{\nu},\boldsymbol{\Lambda}}(X)\right]$
is uniquely minimized at $(\boldsymbol{\nu},\boldsymbol{\Lambda})=(\boldsymbol{\mu},\boldsymbol{\Sigma})$. Equivalently,
$\mathrm{CE}(P\Vert q)=H(P)+\mathrm{KL}(P\Vert q),$
so minimizing $\mathrm{CE}$ over Gaussians is the same as minimizing $\mathrm{KL}(P\Vert q)$ over Gaussians. In particular, for each class $c$, the Gaussian $Q_c^\star=\mathcal{N}(\boldsymbol{\mu}_c,\boldsymbol{\Sigma}_c)$ uniquely minimizes $\mathrm{CE}(P_c\Vert Q)$ over all Gaussian $Q$.
\end{theorem}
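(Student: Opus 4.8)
The plan is to reduce everything to a closed-form expression for the cross-entropy as a function of the Gaussian parameters $(\boldsymbol{\nu},\boldsymbol{\Lambda})$ and then optimize the mean and covariance parts in turn. First I would write out the Gaussian log-density, so that $-\log q_{\boldsymbol{\nu},\boldsymbol{\Lambda}}(X)$ equals a constant plus $\tfrac12\log\det\boldsymbol{\Lambda}$ plus the quadratic $\tfrac12(X-\boldsymbol{\nu})^\top\boldsymbol{\Lambda}^{-1}(X-\boldsymbol{\nu})$. Since $P$ has finite second moment the expectation exists, and after the split $X-\boldsymbol{\nu}=(X-\boldsymbol{\mu})+(\boldsymbol{\mu}-\boldsymbol{\nu})$, using $\mathbb{E}_P[X-\boldsymbol{\mu}]=\vzero$ and the trace identity $\mathbb{E}_P[(X-\boldsymbol{\mu})^\top\boldsymbol{\Lambda}^{-1}(X-\boldsymbol{\mu})]=\Tr(\boldsymbol{\Lambda}^{-1}\boldsymbol{\Sigma})$, it collapses to
\[
\mathrm{CE}(P\Vert q_{\boldsymbol{\nu},\boldsymbol{\Lambda}})=\tfrac{d}{2}\log(2\pi)+\tfrac12\log\det\boldsymbol{\Lambda}+\tfrac12\Tr(\boldsymbol{\Lambda}^{-1}\boldsymbol{\Sigma})+\tfrac12(\boldsymbol{\mu}-\boldsymbol{\nu})^\top\boldsymbol{\Lambda}^{-1}(\boldsymbol{\mu}-\boldsymbol{\nu}).
\]
The mean is then immediate: the only $\boldsymbol{\nu}$-dependent term is the final quadratic form, which is nonnegative because $\boldsymbol{\Lambda}^{-1}\in\mathbb{S}_{++}^d$ and vanishes iff $\boldsymbol{\nu}=\boldsymbol{\mu}$, for every $\boldsymbol{\Lambda}$. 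Substituting $\boldsymbol{\nu}=\boldsymbol{\mu}$ leaves me to minimize $g(\boldsymbol{\Lambda})=\tfrac12\log\det\boldsymbol{\Lambda}+\tfrac12\Tr(\boldsymbol{\Lambda}^{-1}\boldsymbol{\Sigma})$ over the SPD cone.

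The covariance step is the crux, and the subtlety I would flag is that $g$ is \emph{not} convex in $\boldsymbol{\Lambda}$ (the term $\log\det\boldsymbol{\Lambda}$ is concave while $\Tr(\boldsymbol{\Lambda}^{-1}\boldsymbol{\Sigma})$ is convex), so a bare stationarity computation does not by itself certify a global minimum. I would resolve this by simultaneous diagonalization: let $\lambda_1,\dots,\lambda_d>0$ be the eigenvalues of $\boldsymbol{\Lambda}^{-1}\boldsymbol{\Sigma}$, which are real and positive since that matrix is similar to the SPD matrix $\boldsymbol{\Sigma}^{1/2}\boldsymbol{\Lambda}^{-1}\boldsymbol{\Sigma}^{1/2}$. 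Using $\log\det\boldsymbol{\Lambda}=\log\det\boldsymbol{\Sigma}-\sum_i\log\lambda_i$ and $\Tr(\boldsymbol{\Lambda}^{-1}\boldsymbol{\Sigma})=\sum_i\lambda_i$, the objective decouples into
\[
g(\boldsymbol{\Lambda})=\tfrac12\log\det\boldsymbol{\Sigma}+\tfrac12\sum_{i=1}^d\big(\lambda_i-\log\lambda_i\big).
\]
The scalar map $\phi(t)=t-\log t$ is strictly convex on $(0,\infty)$ with a unique minimum at $t=1$, so $g$ is bounded below by $\tfrac12\log\det\boldsymbol{\Sigma}+\tfrac{d}{2}$ with equality iff every $\lambda_i=1$. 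All eigenvalues equal to one forces the SPD matrix $\boldsymbol{\Sigma}^{1/2}\boldsymbol{\Lambda}^{-1}\boldsymbol{\Sigma}^{1/2}$ to be the identity, hence $\boldsymbol{\Lambda}=\boldsymbol{\Sigma}$, which is therefore the unique minimizer. An equivalent and perhaps slicker route is to reparametrize by the precision $\boldsymbol{\Omega}=\boldsymbol{\Lambda}^{-1}$: then $g$ becomes $-\tfrac12\log\det\boldsymbol{\Omega}+\tfrac12\Tr(\boldsymbol{\Omega}\boldsymbol{\Sigma})$, which \emph{is} strictly convex in $\boldsymbol{\Omega}$, so its stationary point $\boldsymbol{\Omega}=\boldsymbol{\Sigma}^{-1}$ is automatically the global optimum.

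Finally, the decomposition and the corollary are bookkeeping. Whenever $P$ admits a density $p$ with finite differential entropy, writing $\mathrm{KL}(P\Vert q)=\mathbb{E}_P[\log(p/q)]=-H(P)+\mathrm{CE}(P\Vert q)$ gives $\mathrm{CE}(P\Vert q)=H(P)+\mathrm{KL}(P\Vert q)$; since $H(P)$ does not depend on $q$, the Gaussian minimizing the cross-entropy also minimizes the forward KL, and the optimal value is $\tfrac12\log\det\boldsymbol{\Sigma}+\tfrac{d}{2}+\tfrac{d}{2}\log(2\pi)-H(P)\ge 0$ (equivalently, the Gaussian attains the maximum-entropy bound for fixed covariance). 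The per-class claim is then the special case $P=P_c$, $(\boldsymbol{\mu},\boldsymbol{\Sigma})=(\boldsymbol{\mu}_c,\boldsymbol{\Sigma}_c)$, legitimate since $\boldsymbol{\Sigma}_c\in\mathbb{S}_{++}^d$ by the standing assumption, yielding the unique minimizer $Q_c^\star=\mathcal{N}(\boldsymbol{\mu}_c,\boldsymbol{\Sigma}_c)$. One caveat I would state explicitly: the cross-entropy formulation and its minimizer hold for any $P$ with finite positive-definite covariance, while the KL reformulation additionally presumes $P$ is absolutely continuous so that $H(P)$ is defined.
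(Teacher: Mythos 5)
Your proposal is correct, and it reaches the same closed-form cross-entropy and the same mean step as the paper, but it certifies the covariance optimum by a genuinely different argument. The paper reparametrizes by the precision $\boldsymbol{A}=\boldsymbol{\Lambda}^{-1}$, computes the matrix gradient $\tfrac12\boldsymbol{\Sigma}-\tfrac12\boldsymbol{A}^{-1}$, and then verifies strict convexity through the Hessian quadratic form $\tfrac12\|\boldsymbol{A}^{-1/2}\boldsymbol{H}\boldsymbol{A}^{-1/2}\|_F^2$, so that the stationary point $\boldsymbol{\Lambda}=\boldsymbol{\Sigma}$ is the unique global minimizer; this is exactly the "slicker route" you mention only in passing. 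Your primary route instead diagonalizes: with $\lambda_1,\dots,\lambda_d>0$ the eigenvalues of $\boldsymbol{\Lambda}^{-1}\boldsymbol{\Sigma}$, the reduced objective becomes $\tfrac12\log\det\boldsymbol{\Sigma}+\tfrac12\sum_i(\lambda_i-\log\lambda_i)$, and scalar calculus on $t\mapsto t-\log t$ pins down $\lambda_i\equiv 1$, i.e.\ $\boldsymbol{\Lambda}=\boldsymbol{\Sigma}$, as the unique minimizer. Each approach has merits: the paper's convexity argument is compact and extends naturally to exponential-family reasoning, while yours is more elementary (no matrix differentials or Hessian positivity needed), yields the optimal value $\tfrac12\log\det\boldsymbol{\Sigma}+\tfrac d2+\tfrac d2\log(2\pi)$ explicitly, and makes transparent why the minimum is global—something you rightly flag cannot be concluded from stationarity alone, since the objective is not convex in $\boldsymbol{\Lambda}$ itself. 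Your closing caveat that the identity $\mathrm{CE}(P\Vert q)=H(P)+\mathrm{KL}(P\Vert q)$ presumes $P$ admits a density (so $H(P)$ is defined), whereas the moment-matching minimizer of the cross-entropy needs no such assumption, is a precision the paper's statement and proof both omit.
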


\begin{proof}
The Gaussian negative log-density is $-\log q_{\boldsymbol{\nu},\boldsymbol{\Lambda}}(x) = \tfrac12(x-\boldsymbol{\nu})^\top \boldsymbol{\Lambda}^{-1}(x-\boldsymbol{\nu}) + \tfrac12\log\!\det\boldsymbol{\Lambda} + \tfrac d2\log(2\pi)$. Taking the expectation under $X\sim P$ with mean $\boldsymbol{\mu}$ and covariance $\boldsymbol{\Sigma}$ yields:
\begin{align*}
\mathrm{CE}(P\Vert q_{\boldsymbol{\nu},\boldsymbol{\Lambda}})
&= \tfrac12\,\mathbb{E}\!\left[(X-\boldsymbol{\nu})^\top\boldsymbol{\Lambda}^{-1}(X-\boldsymbol{\nu})\right]
 + \tfrac12\log\!\det\boldsymbol{\Lambda} + \tfrac d2\log(2\pi) \\
&= \tfrac12\,\mathrm{tr}(\boldsymbol{\Lambda}^{-1}\boldsymbol{\Sigma})
 + \tfrac12(\boldsymbol{\mu}-\boldsymbol{\nu})^\top\boldsymbol{\Lambda}^{-1}(\boldsymbol{\mu}-\boldsymbol{\nu})
 + \tfrac12\log\!\det\boldsymbol{\Lambda} + \tfrac d2\log(2\pi)
\end{align*}
The optimization proceeds in two steps. First, for a fixed $\boldsymbol{\Lambda}\in\mathbb{S}_{++}^d$, the objective is a strictly convex quadratic in $\boldsymbol{\nu}$, and its gradient $\nabla_{\boldsymbol{\nu}}\,\mathrm{CE} = -\,\boldsymbol{\Lambda}^{-1}(\boldsymbol{\mu}-\boldsymbol{\nu})$ is zero only at the unique minimizer $\boldsymbol{\nu}=\boldsymbol{\mu}$. Substituting this solution back yields the reduced objective:
\[
f(\boldsymbol{\Lambda})=\tfrac12\,\mathrm{tr}(\boldsymbol{\Lambda}^{-1}\boldsymbol{\Sigma})+\tfrac12\,\log\!\det\boldsymbol{\Lambda}.
\]
Next, to minimize over the covariance, we re-parameterize in terms of the precision matrix $\boldsymbol{A}=\boldsymbol{\Lambda}^{-1}\in\mathbb{S}_{++}^d$, giving $g(\boldsymbol{A})\coloneqq f(\boldsymbol{A}^{-1}) =\tfrac12\,\mathrm{tr}(\boldsymbol{A}\boldsymbol{\Sigma})-\tfrac12\,\log\!\det\boldsymbol{A}$. Next, using $d(\log\det\boldsymbol{A})=\mathrm{tr}(\boldsymbol{A}^{-1}d\boldsymbol{A})$ and $d(\boldsymbol{A}^{-1})=-\boldsymbol{A}^{-1}(d\boldsymbol{A})\boldsymbol{A}^{-1}$, the gradient is:
\[
\nabla_{\boldsymbol{A}} g(\boldsymbol{A})=\tfrac12\,\boldsymbol{\Sigma}-\tfrac12\,\boldsymbol{A}^{-1}.
\]
Setting the gradient to zero gives the unique stationary point $\boldsymbol{A}^{-1}=\boldsymbol{\Sigma}$, i.e., $\boldsymbol{\Lambda}=\boldsymbol{\Sigma}$. Uniqueness of this minimum follows from the strict convexity of $g(\boldsymbol{A})$. The Hessian operator (in direction $\boldsymbol{H}$) $\nabla^2 g(\boldsymbol{A})[\boldsymbol{H}]=\tfrac12\,\boldsymbol{A}^{-1}\boldsymbol{H}\,\boldsymbol{A}^{-1}$, is positive definite, as its quadratic form $\tfrac12\,\mathrm{tr}(\boldsymbol{H}\boldsymbol{A}^{-1}\boldsymbol{H}\boldsymbol{A}^{-1}) =\tfrac12\,\|\boldsymbol{A}^{-1/2}\boldsymbol{H}\boldsymbol{A}^{-1/2}\|_F^2$ is strictly positive for any non-zero symmetric matrix $\boldsymbol{H}$. Thus $g$ (and therefore $f$) is strictly convex on $\mathbb{S}_{++}^d$, and $(\boldsymbol{\nu},\boldsymbol{\Lambda})=(\boldsymbol{\mu},\boldsymbol{\Sigma})$ is the unique minimizer. The per-class statement follows by setting $P=P_c$ and $(\boldsymbol{\mu},\boldsymbol{\Sigma})=(\boldsymbol{\mu}_c,\boldsymbol{\Sigma}_c)$.
\end{proof}

\paragraph{Implication for the VECTOR+ Pipeline.}
Our methodology first uses contrastive learning to shape the latent space $\mathcal{Z}$ into well-separated, class-aligned regions (as shown in Figure~\ref{fig:both}). The EM algorithm used to fit the GMM (Eq.~\ref{eq:gmm_density}) estimates the moments ($\boldsymbol{\mu}_k, \boldsymbol{\Sigma}_k$) for each component. Theorem~\ref{thm:perclass_gaussian} demonstrates that by modeling each class with the resulting Gaussian component, we are using an optimal surrogate for the true, unknown latent distribution $P_c$.
Importantly, the Theorem~\ref{thm:perclass_gaussian} justifies the use of per-class Gaussian surrogates but does not imply that EM-fitted mixture components will exactly coincide with the latent class-conditionals. In practice, since the number of Gaussian components and the number of classes are the same, after Hungarian alignment, the learned GMM components approximate these class distributions when contrastive separation is sufficiently strong. Therefore, sampling from the aligned GMM component $k^*$ to generate new molecules (Algorithm~\ref{alg:vector_generation}) is an intuitive approach for targeted exploration. Note that we do not assume latent Gaussianity. We adopt the mixtures of Gaussians as a tractable surrogate when the encoder is trained with \Cref{eq:contrastive_loss}.
This theoretical justification aligns with extensive evidence that contrastive learning methods implicitly perform a form of soft clustering, making the latent space amenable to mixture modeling  \cite{li2020prototypical_cl, Balestriero_cl_embedding, bansal2024understanding_cl}. We can observe that contrastive learning produces a representation where different classes behave as components of a mixture model. Based on Theorem~\ref{thm:perclass_gaussian}, GMM is a natural choice for modeling the structured latent space for generative tasks.

\subsection*{Generative Decoding and Controlled Molecular Sampling.}

A GRU-based decoder, $D_\phi: \mathbb{R}^d \to \mathcal{S}$ is trained independently to map latent vectors back into SMILES strings. Generation of novel molecules is performed by sampling from the aligned, class-conditional distributions. To generate a molecule of a target class $c^*$, we first identify the corresponding GMM component $k^* = (\gamma^*)^{-1}(c^*)$. A new latent vector is then sampled from this component, $\mathbf{z}_{\text{new}} \sim \mathcal{N}(\boldsymbol{\mu}_{k^*}, \boldsymbol{\Sigma}_{k^*})$, and decoded into a SMILES string $s_{\text{new}} = D_\phi(\mathbf{z}_{\text{new}})$. This process can be further refined using an optional property-guided hill-climbing procedure (e.g., utilizing Lipinski's Rule of Five as a reward function) to explore high-reward regions within the target cluster. These generated molecules showed both similarity to known actives and meaningful variation, providing new candidates for further screening. The full framework is summarized in \Cref{alg:vector_training} and \ref{alg:vector_generation}.

\begin{algorithm}[!ht]
\caption{VECTOR+ Training and Latent Structuring}
\label{alg:vector_training}
\begin{algorithmic}[1]
\Require Dataset $\mathcal{D}$, Encoder $M_\theta$, Decoder $D_\phi$, margin $m$, epochs $E$, GMM components $K=C$.
\Ensure Trained encoder $M_\theta$, decoder $D_\phi$, aligned GMM ($\Theta, \gamma^*$).

\State Train encoder $M_\theta$ using contrastive loss (Eq.~\ref{eq:contrastive_loss}) for $E$ epochs.
\State Train decoder $D_\phi$ to reconstruct SMILES from latent embeddings using cross-entropy loss.
\State Compute embeddings: $Z \leftarrow \{M_\theta(s_i)\}_{i=1}^n$.
\State Fit GMM to embeddings $Z$ (Eq.~\ref{eq:gmm_density}) to obtain parameters $\Theta = \{\pi_k, \boldsymbol{\mu}_k, \boldsymbol{\Sigma}_k\}_{k=1}^K$ using EM (Eq.~\ref{eq:responsibility}, \ref{eq:mu_update}, \ref{eq:sigma_update}, \ref{eq:pi_update}).
\State Compute affinity matrix $\Phi$ (Eq.~\ref{eq:affinity_matrix}); determine optimal cluster-to-class mapping $\gamma^*$ (Eq.~\ref{eq:optimal_mapping}) via Hungarian algorithm.
\State \Return $M_\theta, D_\phi, \Theta, \gamma^*$
\end{algorithmic}
\end{algorithm}

\begin{algorithm}[!ht]
\caption{VECTOR+ Molecular Generation with Optional Hill-Climbing}
\label{alg:vector_generation}
\begin{algorithmic}[1]
\Require
    Trained decoder $D_\phi$, GMM parameters $\Theta$, mapping $\gamma^*$;
    Target class $c^*$, number of samples $T$;
    Validity checker $v: \mathcal{S} \to \{0,1\}$;
    Optional reward function $R: \mathcal{S} \to \mathbb{R}$;
    Optional hill-climbing steps $H$, neighborhood size $k$, noise scale $\alpha$.
\Ensure Set $\mathcal{M}_{c^*}$ of generated molecules.

\State Identify GMM component: $k^* \leftarrow (\gamma^*)^{-1}(c^*)$.
\State Initialize generated set: $\mathcal{M}_{c^*} \leftarrow \emptyset$.
\For{$t=1,\dots,T$}
    \State Sample initial latent vector: $\mathbf{z}_{\text{current}}\sim\mathcal{N}(\boldsymbol{\mu}_{k^*}, \boldsymbol{\Sigma}_{k^*})$.
    \If{$R \not\equiv \emptyset$}
        \Comment{Refine with property-guided hill-climbing.}
        \For{$h=1,\dots,H$}
            \State Decode current SMILES: $s_{\text{current}}\leftarrow D_\phi(\mathbf{z}_{\text{current}})$.
            \If{not $v(s_{\text{current}})$} \textbf{break} \EndIf
            \State Estimate local variance: $\sigma^2 \leftarrow \frac{1}{k}\sum_{\mathbf{z}'\in\text{KNN}_k(\mathbf{z}_{\text{current}})}\|\mathbf{z}_{\text{current}}-\mathbf{z}'\|_2^2$.
            \State Sample perturbation: $\boldsymbol{\epsilon} \sim \mathcal{N}(0, \alpha^2\sigma^2 I)$.
            \State Propose new vector: $\mathbf{z}_{\text{new}} \leftarrow \mathbf{z}_{\text{current}} + \boldsymbol{\epsilon}$.
            \State Decode new SMILES: $s_{\text{new}} \leftarrow D_\phi(\mathbf{z}_{\text{new}})$.
            \If{$v(s_{\text{new}})$ \textbf{and} $R(s_{\text{new}}) > R(s_{\text{current}})$}
                \Comment{Accept the better molecule.}
                \State $\mathbf{z}_{\text{current}} \leftarrow \mathbf{z}_{\text{new}}$
            \EndIf
        \EndFor
    \EndIf
    \State $s_{\text{final}} \leftarrow D_\phi(\mathbf{z}_{\text{current}})$.
    \If{$v(s_{\text{final}})$ and $s_{\text{final}}$ is novel}
        \State Add $s_{\text{final}}$ to $\mathcal{M}_{c^*}$.
    \EndIf
\EndFor
\State \Return $\mathcal{M}_{c^*}$
\end{algorithmic}
\end{algorithm}

\subsection*{Docking and Molecular Dynamics Simulation Study}

A comprehensive in silico study was performed to identify potential hit molecules among the newly generated compounds. All computational studies were conducted using various modules of the Schrödinger software suite. For PD-L1, the crystal structure corresponding to PDB ID: 5J89 was utilized. In the case of kinase inhibitor studies, four representative protein structures were selected, each corresponding to a distinct class of kinase inhibitor: 6XM8 for Type I (I), 1XKK for Type I\nicefrac{1}{2} (I\nicefrac{1}{2}), 4ASD for Type II (II), and 3O96 for allosteric inhibitors (A).

All generated ligands were prepared using the LigPrep module under default settings, while protein structures were preprocessed and optimized using the Protein Preparation Wizard. Receptor grids were generated around the co-crystallized ligands using the Receptor Grid Generation module. Docking studies were carried out using the Glide docking program in extra precision (XP) mode, employing default parameters and the OPLS force field. The resulting docking poses were further visualized and analyzed using PyMOL.

To evaluate the dynamic stability of the protein–ligand complexes, molecular dynamics (MD) simulations were performed using Desmond. An orthorhombic simulation box was constructed around the complex, solvated using the TIP3P water model, and neutralized with appropriate counterions. The system was subjected to energy minimization, followed by a 250-nanosecond production MD run at 298 K and 1 atm pressure.

\section*{Results}

\subsection*{Data Collection and Data Preprocessing}

To initiate the discovery process for new PD-L1 inhibitors, we compiled a dataset of small molecules reported in the peer-reviewed literature \cite{wang2022discovery,yang2021design,guo2020design,ouyang2021design,song2021design,qin2019discovery,qin2021discovery}, each annotated with experimentally measured $\text{IC}_{50}$ values. After removing repeated entries and ensuring consistency, we finalized a curated set of 350 unique molecules with corresponding canonical SMILES representations and $\text{IC}_{50}$ values. The outliers were removed by using z-score method. After removing the outliers, a total of 296 molecules were selected for downstream modeling. 
The raw $\text{IC}_{50}$ distribution revealed a pronounced right-skew, with a large concentration of highly potent compounds ($\text{IC}_{50}$ < 100 nM) and a long tail extending to over 1000 nM (Fig.~\ref{fig: data preprocessing}C). This heavy-tailed, non-Gaussian distribution posed a challenge for the development of ML models, which typically assumes a more symmetric or normalized target distribution. To address this, we explored three standard transformation techniques, logarithmic, square root, and Box-Cox transformations. Among these, the Log transformation effectively stabilized variance and yielded a near-normal distribution suitable for learning-based tasks (Fig.~\ref{fig: data preprocessing}D). The log transformation was selected as the final preprocessing step for subsequent model development.

The kinase inhibitor dataset utilized in this study was taken from previously reported literature \cite{miljkovic2019machine}, which was originally sourced from the KLIFS database \cite{van2014klifs,kooistra2016klifs}. The dataset comprises four mechanistically distinct classes of inhibitors based on their binding modes to the kinase receptor: Type I (n = 1425), Type I\nicefrac{1}{2} (n = 394), Type II (n = 190), and allosteric inhibitors (n = 47) (Fig.~\ref{fig: data preprocessing}B). Type I inhibitors interact competitively with the ATP-binding site of kinases in their active conformation \cite{gavrin2013approaches}. In contrast, Type II inhibitors target the inactive conformation, occupying an induced hydrophobic pocket adjacent to the ATP site \cite{liu2006rational}. Type I\nicefrac{1}{2} inhibitors bind to an intermediate conformational state, exhibiting features of binding of Type I and Type II \cite{koeberle2012skepinone}. Lastly, allosteric inhibitors engage regions distinct from the ATP-binding pocket, modulating kinase activity through conformational changes at remote allosteric sites \cite{gavrin2013approaches}. Canonical SMILES strings were used to represent molecular structures and served as input for further machine learning analyses.

\subsection*{Contrastive Learning based on Data Labels}

To explore whether contrastive learning can enhance the separation of ligands based on biological activity, we started with pre-trained ChemBERTa embeddings and visualized them using UMAP. Initially, the embeddings showed no clear separation between the two reactivity bins corresponding to the PD-L1 dataset (Fig.~\ref{fig:both}A). However, after applying the contrastive learning framework, we observed a significant improvement in clustering (Fig.~\ref{fig:both}B). The model effectively pushed apart the embeddings of highly active and low active ligands, forming two well-separated clusters in the latent space. This shows that contrastive learning helped align the molecular representations with their biological activity, leading to more meaningful and interpretable clustering.

We also applied the same methodology on the kinase dataset that consisted of ligands classified into 4 different activity classes. In this case, each class was treated as a separate cluster. Initially the raw embedding did not form any distinct clusters (Fig.~\ref{fig:both}C) but interestingly, contrastive learning again proved effective, leading to clear and tight clustering into four distinct regions in the embedding space, each corresponding to a kinase activity class (Fig.~\ref{fig:both}D). This further supports the idea that contrastive learning can generalize well across datasets and capture activity-driven molecular patterns.

\begin{figure}[!htbp]
    \centering
    \begin{minipage}{0.9\textwidth}
        \centering
        \includegraphics[width=0.95\textwidth]{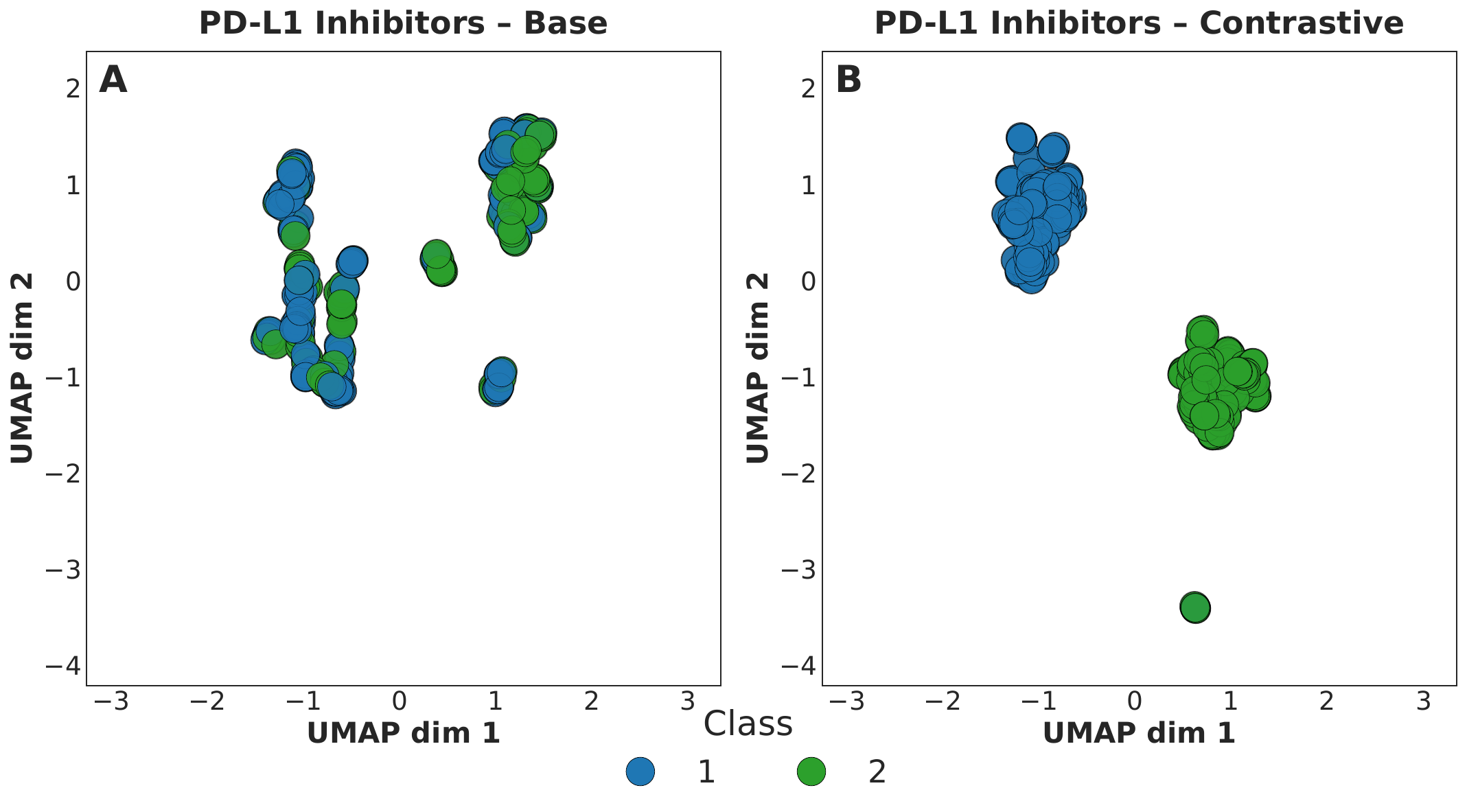}
        \label{fig:pdl1_base_vs_contrastive}
    \end{minipage}
    \begin{minipage}{0.9\textwidth}
        \centering
        \includegraphics[width=0.95\textwidth]{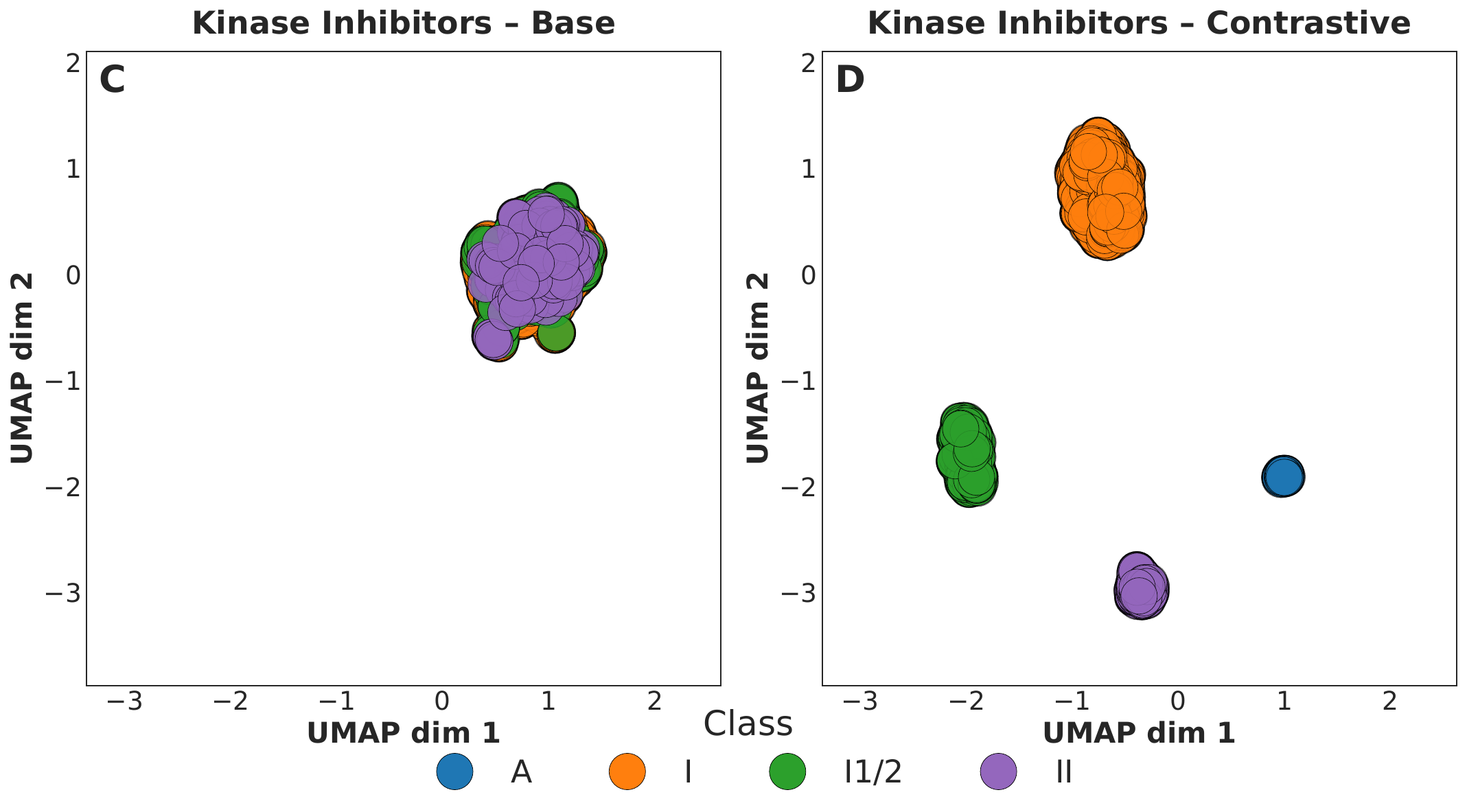}
        \label{fig:kinase_base_vs_contrastive}
    \end{minipage}
    \caption{This figure shows dimensionality reduction from 768-dimensional ChemBERTa embeddings to 2D UMAP space. (A-B) PD-L1 inhibitors classified into two activity groups based on log(IC\textsubscript{50}) values (Class 1: below median; Class 2: above median). In base embeddings (A), molecular representations show significant overlap between activity classes, while contrastive learning (B) successfully separates compounds into distinct clusters corresponding to their biological activity. (C-D) Kinase inhibitors categorized into four binding mode classes (A, I, I\nicefrac{1}{2}, and II). The base embeddings (C) show poor separation between the different binding modes, whereas the contrastive embeddings (D) produce clearly delineated clusters for each binding mode classification. These visualizations demonstrate how contrastive learning effectively restructures the embedding space to prioritize relevant features while maintaining molecular structural information, enabling more targeted molecule generation.} 
    \label{fig:both}
\end{figure}

\subsection*{GMM-Based Reactivity Clustering and Generation}

Once we obtained the 2-cluster contrastive embedding for PD-L1 (high vs. low activity), we trained a Gaussian Mixture Model (GMM) on the high-activity cluster to model the underlying distribution. The GMM was able to capture the molecular embedding density accurately. The sampled distribution closely matches the original high-activity data. We also visualized both densities together and saw strong overlap, confirming that GMM has effectively learned the latent space distribution.

From this learned distribution, we sampled 8,374 new molecular embeddings specifically from the high-activity cluster. These embeddings were then passed through our decoder to generate valid molecular structures. Representative molecules generated through this approach are shown below. Most of these resemble the chemical scaffolds found in our high-activity bin, but also explore novel chemical space around them (Fig.~\ref{fig:PDL1_kinase_compound_structure}).

We followed the same approach for the kinase dataset as well. After obtaining the 4-class contrastive embedding, we trained separate GMMs for each cluster. From these GMMs, we sampled 2500 molecular embeddings per class, resulting in a total of 10,000 kinase-like molecules. These generated compounds capture the density and diversity within each kinase class and will be valuable for downstream exploration and in-silico studies. 

This workflow demonstrates how the integration of contrastive learning and generative modeling enables the efficient production of chemically diverse molecules, even from very limited input data. Notably, from a set of just 47 Type A inhibitors, VECTOR+ was able to generate 2,500 novel, drug-like molecules, highlighting the beauty and efficiency of this pipeline for advancing both chemical and biological discovery.

\begin{figure}[!ht]
    \centering
    \includegraphics[width=1.0\linewidth]{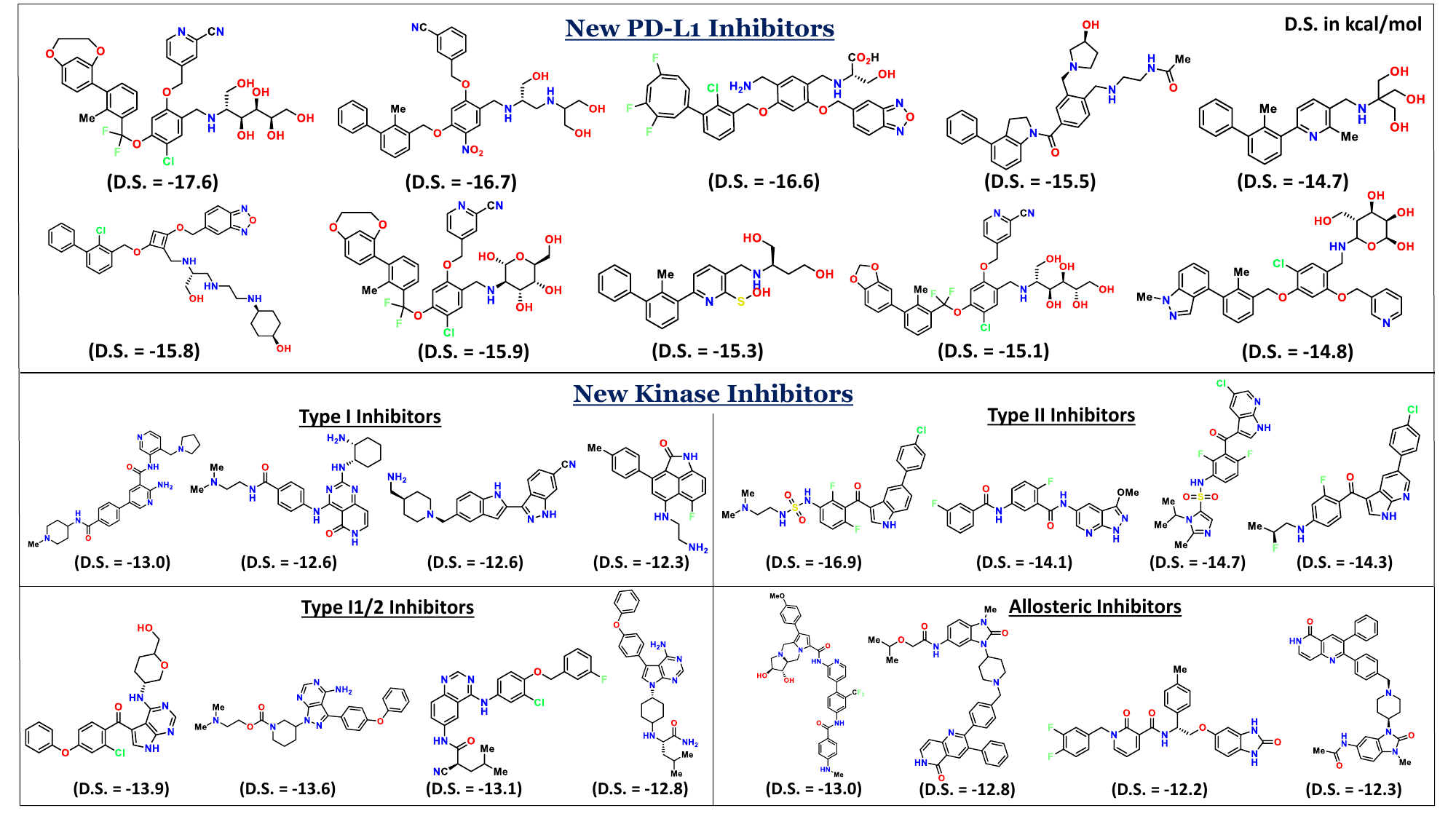}
    \caption{In silico studies were performed on the generated molecules from both the PD-L1 and kinase datasets. This figure displays the top-ranked candidate structures from each target along with their corresponding docking scores (in kcal/mol). The consistently high docking scores highlight the model’s ability to generate ligands with strong predicted binding affinity. Notably, structural analysis of the PD-L1 candidates revealed the emergence of an eight-membered ring motif, which was absent from the training set, demonstrating the model’s capacity to explore novel and unexplored regions of chemical space.}
    \label{fig:PDL1_kinase_compound_structure}
\end{figure}

\subsection*{Comparative Evaluation of VECTOR+ and Established Generative Models}

In this work, we chose JT-VAE and MolGPT as our main baselines. Both methods can be trained directly on small, SMILES-based datasets like ours, making them well-suited for a fair comparison. Other popular frameworks were not included because they require inputs or resources that do not match the scope of our study. For example, GeoDiff depends on 3D or DFT-level features, which would mean calculating quantum-chemical descriptors for every molecule in our PD-L1 and kinase sets, an impractical step for our current data. Similarly, CONSMI integrates a protein transformer as part of its training pipeline, but our datasets are ligand-only and do not provide paired protein information. REINVENT, while powerful, relies on custom reward functions (such as docking or ADMET scores) and large-scale sampling, which is difficult to calibrate in a low-data setting. By focusing on JT-VAE and MolGPT, we ensured that VECTOR+ was compared against models that are technically compatible and computationally feasible within our framework.

With this rationale in place, we next benchmarked the performance of VECTOR+ against JT-VAE and MolGPT using the PD-L1 dataset. The evaluation covered key metrics, including molecular weight, logP, docking score, structural uniqueness, chemical validity, and Tanimoto similarity.

Molecular weight and logP are fundamental physicochemical properties that influence drug absorption, distribution, and bioavailability. logP measures a compound’s lipophilicity, with values in an optimal range being desirable for drug-like behavior. VECTOR+ generated molecules with molecular weight and logP distributions closely matching those of the original PD-L1 compounds (Fig.~\ref{fig:model evaluation}A, ~\ref{fig:model evaluation}B). This suggests that VECTOR+ effectively captures the physicochemical profile of the target space, unlike JT-VAE and MolGPT, which showed greater deviations from the original distribution.

Chemical validity refers to the proportion of generated molecules that follow basic chemical rules, such as proper valency and atom connectivity. VECTOR+ and JT-VAE both achieved high validity, consistently generating chemically sound structures (Fig.~\ref{fig:model evaluation}D). MolGPT performed poorly in this regard, with validity scores below 5\%, indicating a failure to learn fundamental chemical syntax. This highlights a major weakness of large language model-based generators in low-data settings.

High uniqueness, which reflects the ability to generate a structurally diverse set of molecules, was observed in both VECTOR+ and MolGPT. JT-VAE, by contrast, exhibited low uniqueness, indicating limited exploration of chemical space (Fig.~\ref{fig:model evaluation}D).

Tanimoto similarity measures the structural overlap between two molecules based on their fingerprint representations, with higher scores indicating greater shared substructure \cite{bajusz2015tanimoto}. In this evaluation, VECTOR+ achieved the highest Tanimoto similarity to the original PD-L1 ligands. The Tanimoto similarity for our model was 0.65, which reflects its strong ability to generate molecules that stay close to the target chemical space while preserving sufficient chemical diversity. Both MolGPT and JT-VAE showed lower similarity scores, indicating weaker learning of relevant structural features (Fig.~\ref{fig:model evaluation}C, ~\ref{fig:model evaluation}D).

Synthetic accessibility (SA) score estimates how easily a molecule can be synthesized, with lower scores indicating higher feasibility.\cite{ertl2009estimation} VECTOR+ generated molecules with moderate SA scores, balancing structural complexity with practical synthesizability. MolGPT produced molecules with higher SA scores, suggesting that many of its outputs would be challenging to synthesize. JT-VAE generated simpler structures with lower SA scores, but at the cost of reduced structural diversity (Fig.~\ref{fig:model evaluation}E).

The Docking study was performed with the molecules generated by these models, and average docking scores were calculated for the top 10, 25, 50, 100, 150, and 200 compounds. In all cases, VECTOR+ consistently outperformed JT-VAE and MolGPT, demonstrating superior target-binding potential (\Cref{tab: Comparasion_table}). The top molecule generated by VECTOR+ achieved a docking score of –17.6 kcal/mol, in contrast to –12.4 kcal/mol for JT-VAE and -13.66 kcal/mol for MolGPT. Moreover, the average docking score of the top 50 molecules was markedly better for VECTOR+, JT-VAE, and MolGPT were –15.29 kcal/mol, –8.67 kcal/mol, and -11.04 kcal/mol, respectively, highlighting the ability of our model to generate molecules with superior binding affinity.

\begin{figure}[!ht]
    \centering
    \includegraphics[width=1.0\linewidth]{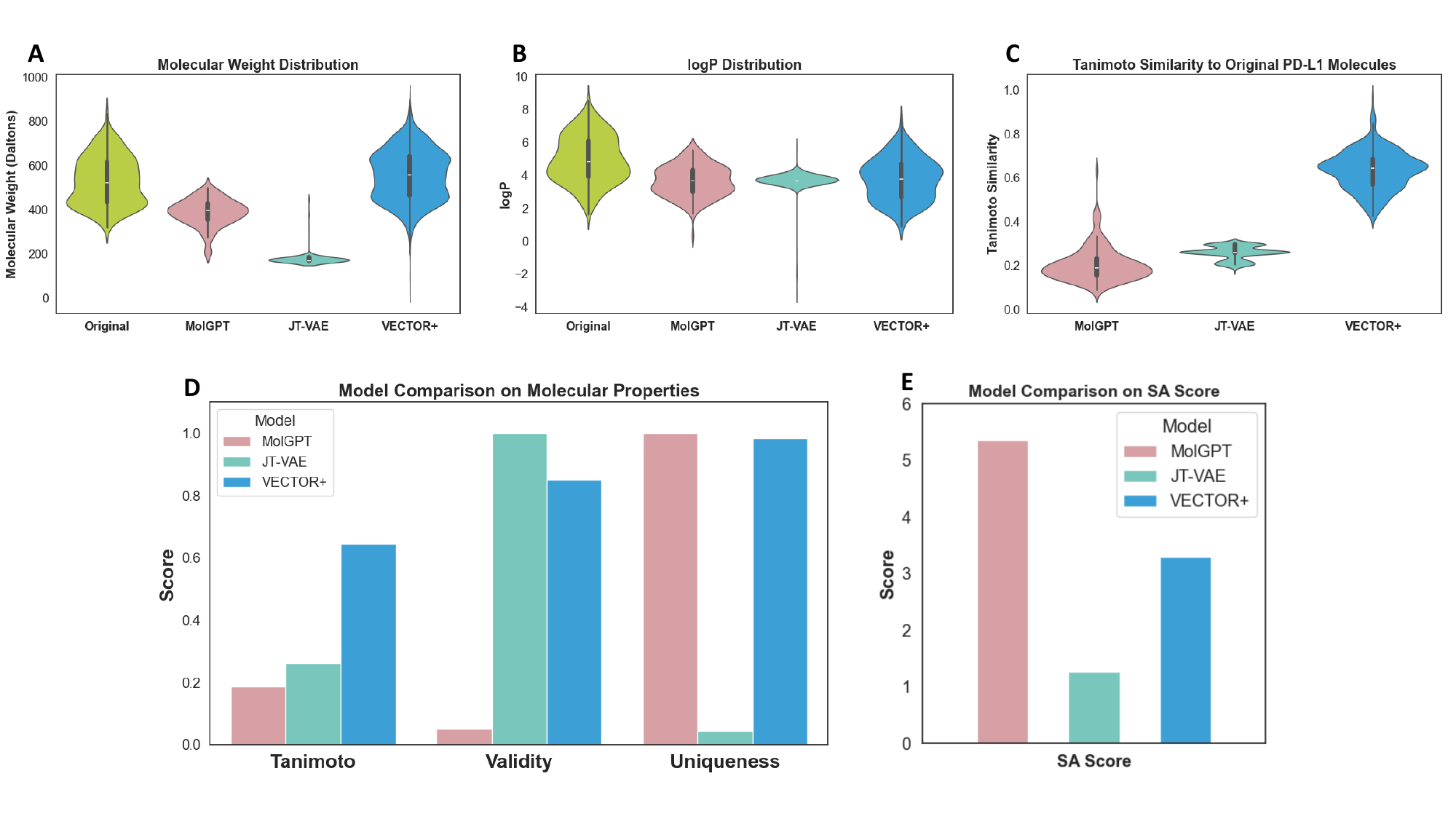}
    \caption{This figure shows the comparative evaluation of molecular properties and generative performance between the models.
(A) Molecular weight distributions of original PD-L1 ligands and molecules generated by MolGPT, JT-VAE, and VECTOR+, showing VECTOR+'s better alignment with the original distribution. (B) logP distribution comparison, where VECTOR+ captures the lipophilicity profile of the original molecules more accurately. (C) Tanimoto similarity to the original PD-L1 ligands, with VECTOR+ achieving the highest similarity, indicating strong structural alignment with the training space. (D) Bar plot comparing Tanimoto similarity, validity, and uniqueness across models, where VECTOR+ demonstrates superior structural accuracy and diversity. (E) Synthetic accessibility (SA) score comparison, where lower scores indicate higher synthetic feasibility; JT-VAE outperforms others in this metric, while VECTOR+ maintains a balance between accessibility and molecular realism.
}
    \label{fig:model evaluation}
\end{figure}

\begin{table}[H]
\centering
\caption{\textbf{Comparison of VECTOR+ with other generative models.}
In silico evaluation of molecules generated by VECTOR+ and benchmark generative models based on docking scores against the target protein PD-L1. The best docking scores are highlighted in \textbf{bold}.}
\label{tab: Comparasion_table}
\small
\begin{tabular}{lcccccc}
\toprule
\textbf{Molecules}          &&\textbf{Avg. Docking Score (kcal/mol) $\downarrow$}  \\
                  & VECTOR+                  & JT-VAE            & MolGPT \\
\midrule
Top molecule\hspace{25pt}      & \textbf{-17.6}            & -12.31           & -13.66          \\
Top 10            & \textbf{-16.16}           & -9.49            & -12.99          \\
Top 25            & \textbf{-15.63}           & -8.89            & -11.82           \\
Top 50            & \textbf{-15.12}           & -8.67            & -11.04           \\
Top 100           & \textbf{-14.58}           & -8.40            & -10.10          \\
Top 150           & \textbf{-14.20}           & -8.22            & -9.41          \\
Top 200           & \textbf{-13.84}           & -8.10            & -8.76            \\
\bottomrule
\end{tabular}
\end{table}

\subsection*{Computational Evaluation of GMM-Generated Ligands for PD-L1 and Kinase Inhibition}

Computational studies were performed to investigate the binding energies, interaction modes, and overall stability of the protein–ligand complexes. These analyses were carried out for both the newly designed molecules and previously reported compounds, and the results were subsequently compared. Molecular docking analyses offered detailed insights into the binding modes and interaction networks between the target protein and the ligands. Generally, stronger protein–ligand interactions correspond to more favorable (i.e., lower) docking scores, which are often associated with enhanced biological activity (i.e., lower IC\textsubscript{50} value).  However, it is important to note that no direct mathematical correlation exists between docking scores and biological outcomes.

\subsection*{PD-L1 Ligands}  

To prioritize promising candidates for downstream validation, extensive computational studies were conducted on a library of 8,374 structurally diverse molecules generated via GMM-based sampling from the latent space of the lowest $\text{IC}_{50}$ bin. Virtual screening was initially performed against the PD-L1 protein (PDB ID: 5J89) to evaluate binding potential. The top 50 hits exhibited docking scores ranging from -15.2 to -17.6 kcal/mol, notably outperforming the reference set of known PD-L1 inhibitors, which demonstrated docking affinities between -12.2 and -15.4 kcal/mol. This substantial increase in binding affinity highlights the superior interaction profiles of the newly generated compounds. Structural analysis revealed that the biphenyl moiety, a pharmacophoric feature consistently present in reported PD-L1 inhibitors, was retained in a significant proportion of the top-scoring molecules. Interestingly, several of the generated compounds also introduced novel structural motifs absent in the reference dataset. For example few of the generated molecules contain an eight-member ring, but it was absent in the training data. It indicates the model’s capacity to explore innovative chemical space while preserving key functional features. 

Protein–ligand interaction diagrams (Fig.~\ref{fig:PDL1_int_MD}A,  ~\ref{fig:PDL1_int_MD}B) revealed the formation of multiple stabilizing interactions. Specifically, key hydrogen bonds were observed with residues Thr-20, Asp-122, and Arg-125, alongside hydrophobic contacts with Tyr-56. These interactions are consistent with those observed in known PD-L1 binding modes, reinforcing the structural relevance of the generated compounds. Based on both structural novelty and synthetic accessibility, few compounds were selected for molecular dynamics (MD) simulations to evaluate their dynamic stability within the PD-L1 binding site. The stable trajectory in the MD simulations study over a 250 ns timescale, suggests a strong conformational persistence of the ligands within the active site (Fig.~\ref{fig:PDL1_int_MD}C,  ~\ref{fig:PDL1_int_MD}D). 

\begin{figure}[!ht]
    \centering
    \includegraphics[width=1.0\linewidth]{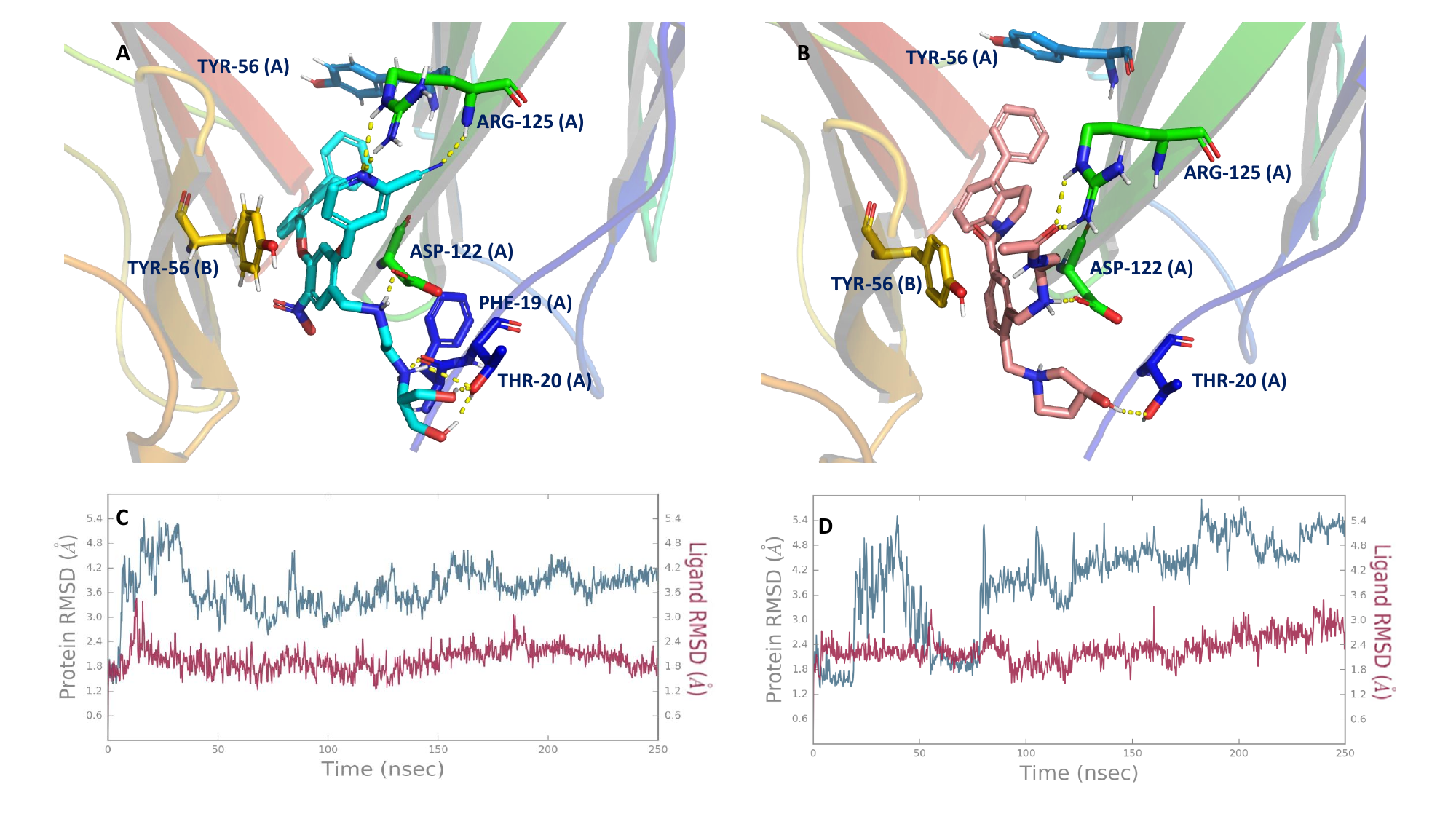}
    \caption{This figure showed molecular docking and MD simulation of top-ranked generated molecules. A total of 8,374 AI-generated molecules were virtually screened against PD-L1 (PDB ID: 5J89). (A) and (B) depict 2D interaction diagrams of two top-scoring candidates, illustrating multiple stabilizing interactions with key amino acid residues within the binding pocket. (C) and (D) present the 250 ns molecular dynamics simulation results for the corresponding complexes. The ligand RMSD remained below $2.5$ \r{A} throughout the simulations, indicating stable protein-ligand interactions and robust binding conformations.}
    \label{fig:PDL1_int_MD}
\end{figure}

\subsection*{Kinase Ligands}
A total of 2,500 molecules were generated for each inhibitor class and subjected to virtual screening to identify the top candidates per class (10,000 in total). Notably, for the allosteric class we trained on only 47 labeled inhibitors, yet the class-conditioned sampler produced 2,500 novel candidates (with the capacity to generate more), underscoring the framework’s effectiveness in a low-data regime. To benchmark the screening results, molecular docking studies were also performed using known kinase inhibitors such as Brigatinib, Lapatinib, Sorafenib, and IQO, as reference compounds for Type I, Type I\nicefrac{1}{2}, Type II, and allosteric inhibitors, respectively.

The docking analysis revealed that the top-ranked molecules in all four classes exhibited superior docking scores compared to their respective reference inhibitors. Notably, in the Type I and Type II classes, the lead compounds demonstrated significantly enhanced docking scores relative to Brigatinib and Sorafenib. For Type I, the top candidate achieved a docking score of –13.0 kcal/mol, surpassing Brigatinib's –11.0 kcal/mol. Similarly, the best-performing Type II inhibitor displayed a docking score of –16.89 kcal/mol, markedly better than Sorafenib's –13.1 kcal/mol. In both of these classes, the top ten molecules consistently outperformed the respective reference inhibitors. For the Type I\nicefrac{1}{2} and allosteric classes, the improvement was marginal but still favorable when compared to Lapatinib and IQO. Interaction analysis of the top-ranked ligands from the Type I and Type II classes revealed multiple stabilizing contacts with the kinase binding pocket, including key hydrogen bonds (Fig.~\ref{fig:Kinase_int_MD}A, ~\ref{fig:Kinase_int_MD}B). Furthermore, 250 ns molecular dynamics (MD) simulations confirmed the dynamic stability of both ligands at their respective binding sites, with minimal fluctuations observed over the simulation timescale (Fig.~\ref{fig:Kinase_int_MD}C,  ~\ref{fig:Kinase_int_MD}D).

\begin{figure}[!ht]
    \centering
    \includegraphics[width=1.0\linewidth]{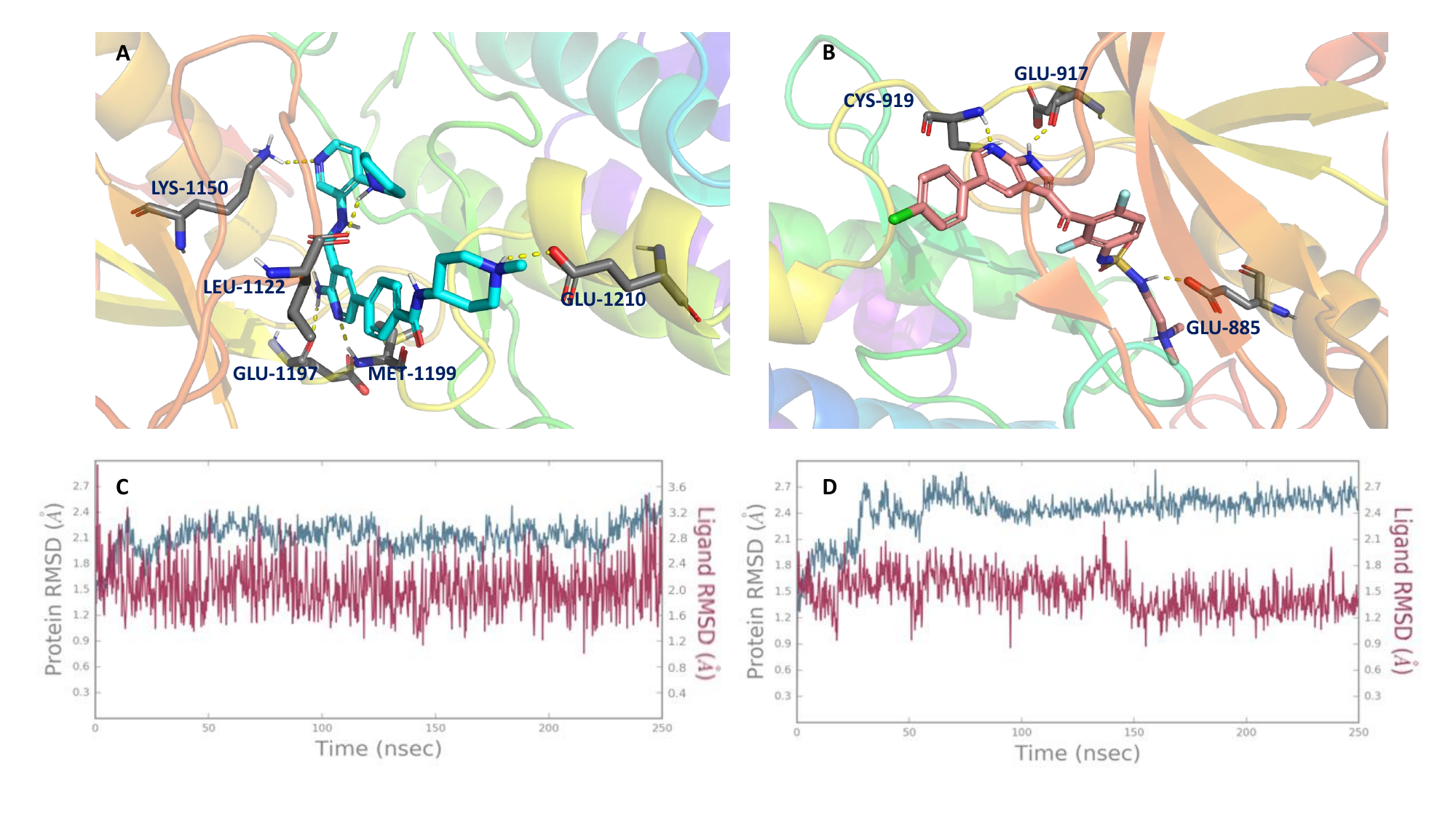}
    \caption{Molecular docking and MD simulations of generated kinase inhibitors. A total of 2,500 novel molecules were generated for each inhibitor class and subjected to virtual screening. (A) and (B) show 2D interaction diagrams of the top-ranked molecules from the Type I and Type II inhibitor classes, respectively, highlighting key interactions with residues in the kinase active site. (C) and (D) present 250 ns molecular dynamics simulation results for the corresponding complexes. The simulations revealed stable ligand trajectories, indicating robust and stable protein-ligand binding conformations.}
    \label{fig:Kinase_int_MD}
\end{figure}

Hence, the docking and MD simulation studies collectively support the potential of these GMM-generated ligands as promising PD-L1 and kinase inhibitors, warranting further experimental validation and the exploration of the structure-activity relationship (SAR).

\section*{Societal Impact}

Data scarcity remains a common problem in drug discovery across many therapeutic areas like oncology, antibiotic resistance, neglected tropical diseases, and rare genetic disorders and others. The present study shows that our generative model can effectively learn and generate potential drug molecules even with minimal data. Hence, our framework can be used to build molecular libraries tailored for a specific system with low data availability. This would speed up the discovery of new therapeutic leads in areas where traditional approaches fail.

Although the proposed framework has strong potential for advancing drug discovery, it is important to acknowledge the possible risks if misapplied. Generative models trained on chemical datasets could, in principle, be repurposed to design harmful substances, including highly toxic or environmentally hazardous compounds. Appropriate safeguards are therefore needed to ensure the technology is not exploited to generate toxic or dangerous chemicals.

\section*{Limitations and Future Work}

While our framework performs well in low-data settings,the quality of the generated molecules depends heavily on the SMILES decoder, which can sometimes produce repetitive, or hard-to-synthesize structures. Additionally, our current model relies exclusively on ligand-based information (SMILES) but does not incorporate protein features like binding site conformations or mutation-driven resistance mechanisms that influence biological activity. Future work will focus on integrating protein-level features to build a more comprehensive model and incorporating tools to check for synthetic feasibility and drug-likeness, which would generate candidates with a higher intrinsic probability of effective binding and reduce the reliance on downstream docking for screening. 

Beyond drug discovery, this methodology can be extended to other data-scarce domains like reaction development and catalysis to design novel ligands with improved efficiency. In summary, our framework is a broadly applicable tool for accelerating discovery in low-resource fields, including rare diseases, antibiotic resistance, and synthetic chemistry.

\section*{Conclusion}

The recent surge in machine learning for drug discovery has largely been driven by deep generative models trained on massive, general-purpose chemical datasets (ChEMBL, Zinc). However, a major challenge remains: these models often do not perform well when applied to specific problems with only small, carefully chosen datasets, a common scenario in real-world drug discovery. Current state-of-the-art architectures are often tailored for domain-specific applications by incorporating scaffold-based generation strategies or integrating protein–ligand interaction data during training. Yet, they remain fundamentally reliant on data volume rather than domain specificity.

To address this critical gap, we developed VECTOR+, a generative architecture specifically optimized for data-scarce, domain-focused applications. Unlike conventional approaches, VECTOR+ learns directly from compact datasets (often in the range of hundreds to a few thousand molecules), reflecting the real constraints of early-stage drug discovery. Our platform integrates a transformer-based encoder, contrastive learning-driven clustering, and Gaussian mixture model-based sampling within a unified pipeline. This synergy enables VECTOR+ to capture subtle, system-specific structure–activity relationships that broader models frequently overlook.

We demonstrated the capabilities of VECTOR+ in the context of two therapeutically relevant systems: PD-L1 and kinase receptors. Even in these low-data regimes, VECTOR+ was able to generate structurally novel and chemically diverse inhibitors, achieving higher docking scores compared to known actives and surpassing leading generative models across metrics such as novelty, uniqueness, and Tanimoto similarity. Ongoing synthesis and biological evaluation of these candidates further reinforce the practical impact of our approach.

By bridging the gap between advanced generative modeling and the realities of limited, domain-specific datasets, VECTOR+ establishes a new paradigm for efficient and targeted molecular discovery. While we emphasize our method's ability to operate effectively in data-scarce settings, the framework is not restricted to such scenarios: with more abundant labeled data, it can leverage richer supervision to further enhance clustering quality, sampling diversity, and property alignment. This strategy not only enables the design of drug-like molecules in challenging settings but also paves the way for broader applications, such as ligand optimization and reaction yield prediction, where data scarcity remains a limiting factor.

\subsection*{Acknowledgement}
The authors thank Harlin Lee (SDSS, UNC-CH) and Caroline Moosm\"{u}ller (Mathematics, UNC-CH) for their valuable feedback and comments.
The authors acknowledge Dhruv Goyal (Department of Mechanical Engineering, IIT Bombay) and Kashyap Khandelwal (Department of Computer Science and Engineering, IIT Bombay) for their valuable assistance in running JT-VAE baselines.

\bibliography{reference}

\newpage
\section*{Appendix}

\subsection*{Dataset Details}

For model development, the PD-L1 inhibitors were categorized into two groups based on their biological activity (Fig.~\ref{fig: data preprocessing}A). The raw $\text{IC}_{50}$ values exhibited a pronounced right-skewed distribution (Fig.~\ref{fig: data preprocessing}C), which was suboptimal for training models. To address this, the $\text{IC}_{50}$ values were log-transformed, resulting in an approximately normal distribution (Fig.~\ref{fig: data preprocessing}D) that was more suitable for downstream model development. On the other hand, the kinase inhibitors were classified into four distinct classes according to their binding modes. The four inhibitor class are Type I, Type II, Type I\nicefrac{1}{2} and Allosteric inhibitors (Fig.~\ref{fig: data preprocessing}B).

\begin{figure}[!ht]
    \centering
    \includegraphics[width=0.85\linewidth]{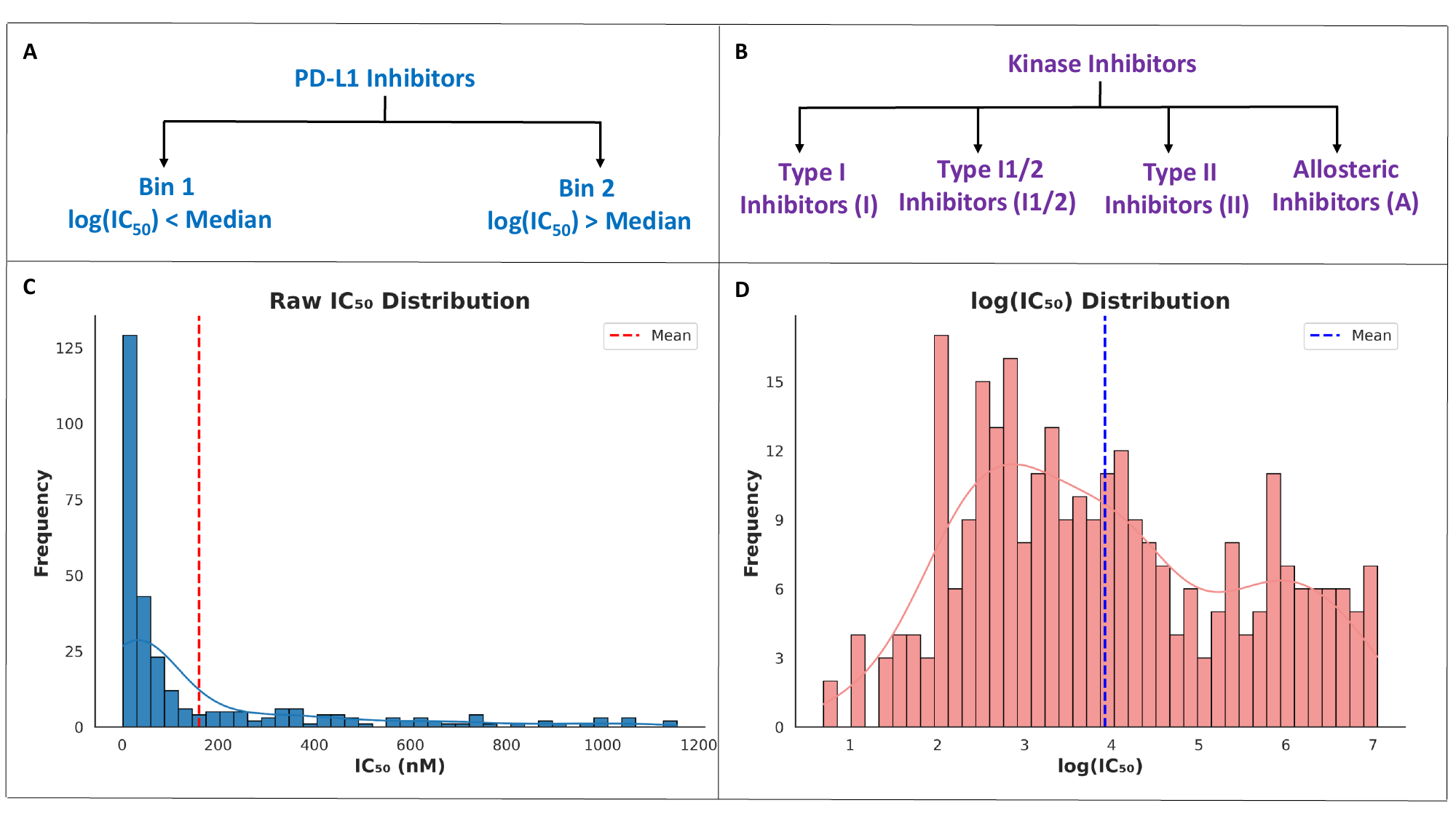}
    \caption{The figure showed the data segmentation and preprocessing applied prior to machine learning. (A) PD-L1 inhibitors dataset was categorized based on their biological activity. (B) Kinase inhibitor dataset was segmented according to the inhibitors’ mode of binding to the target protein. (C) The raw $\text{IC}_{50}$ values of the PD-L1 inhibitors exhibited a right-skewed distribution. (D) Log transformation of the $\text{IC}_{50}$ values of the PD-L1 resulted in a near-normal distribution, making the data more suitable for machine learning applications.}
    \label{fig: data preprocessing}
\end{figure}

\begin{table}[ht]
\small
\centering
\caption{This table summarizes previously reported generative models along with the datasets used for their training. All models were trained on datasets containing at least 25,000 molecules, with some reaching up to several million.}
\label{tab: large dataset table}
\begin{tabular}{lccc}
\toprule
Model          & Year & Dataset & Size \\
\midrule
MolGPT\cite{bagal2021molgpt}\hspace{20pt}          & 2021           & GuacaMol         & 1.1 Mil \\
JT-VAE\cite{jin2018junction}                        &2019          &PubChem           &10 Mil \\
GeoDiff\cite{xu2022geodiff}                          &2022         &QM9, QM8          &5 Mil\\
MEVO\cite{he2025generative}                    &2025            &REAL and ZINC20          &20 Mil\\
BioSNAP\cite{zhang2025zero}                   &2025                &Drug Bank           &27,464\\
TGM-DLM\cite{gong2024text}                    &2024               &ChEBI-20            &33,010\\
ACEGEN\cite{bou2024acegen}                    &2024              &ChEMBL and ZINC        &2 Mil\\
HYFORMER\cite{izdebski2025synergistic}         &2025              &GuacaMol          &1.1 Mil\\
\bottomrule
\end{tabular}
\end{table}

\subsection*{Structures of the new compounds}
Here, we present additional representative structures of the top-performing candidates, identified through in silico studies, from newly generated PD-L1 and kinase inhibitors.

\begin{figure}[H]
    \centering
    \includegraphics[width=1.0\linewidth]{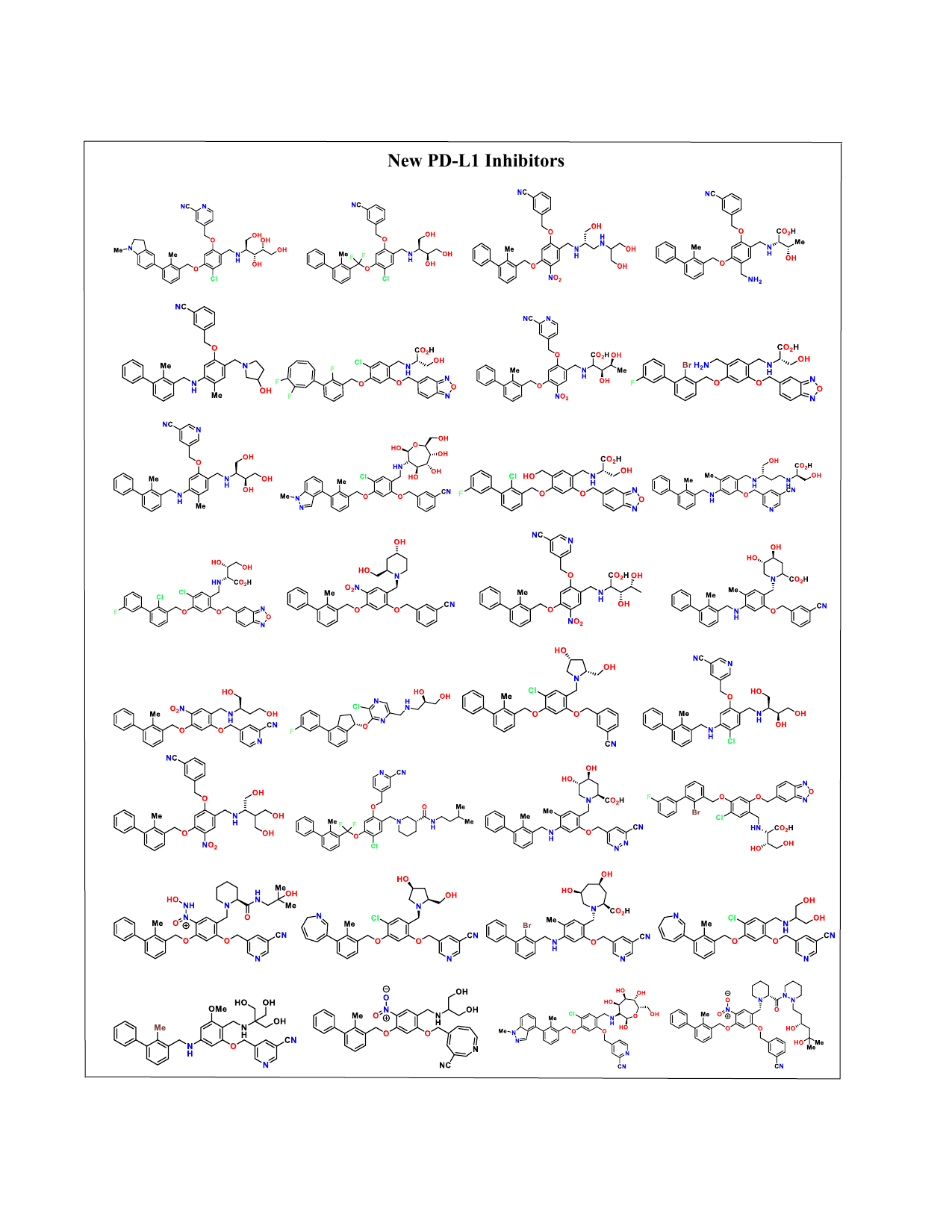}
    \caption{A total of 8,374 molecules were generated using our method for the PD-L1 dataset. In silico studies on these compounds identified several potential top hits. This figure presents the structures of a few representative examples.}
    \label{fig:SI_PDL1}
\end{figure}

\begin{figure}[H]
    \centering
    \includegraphics[width=1.0\linewidth]{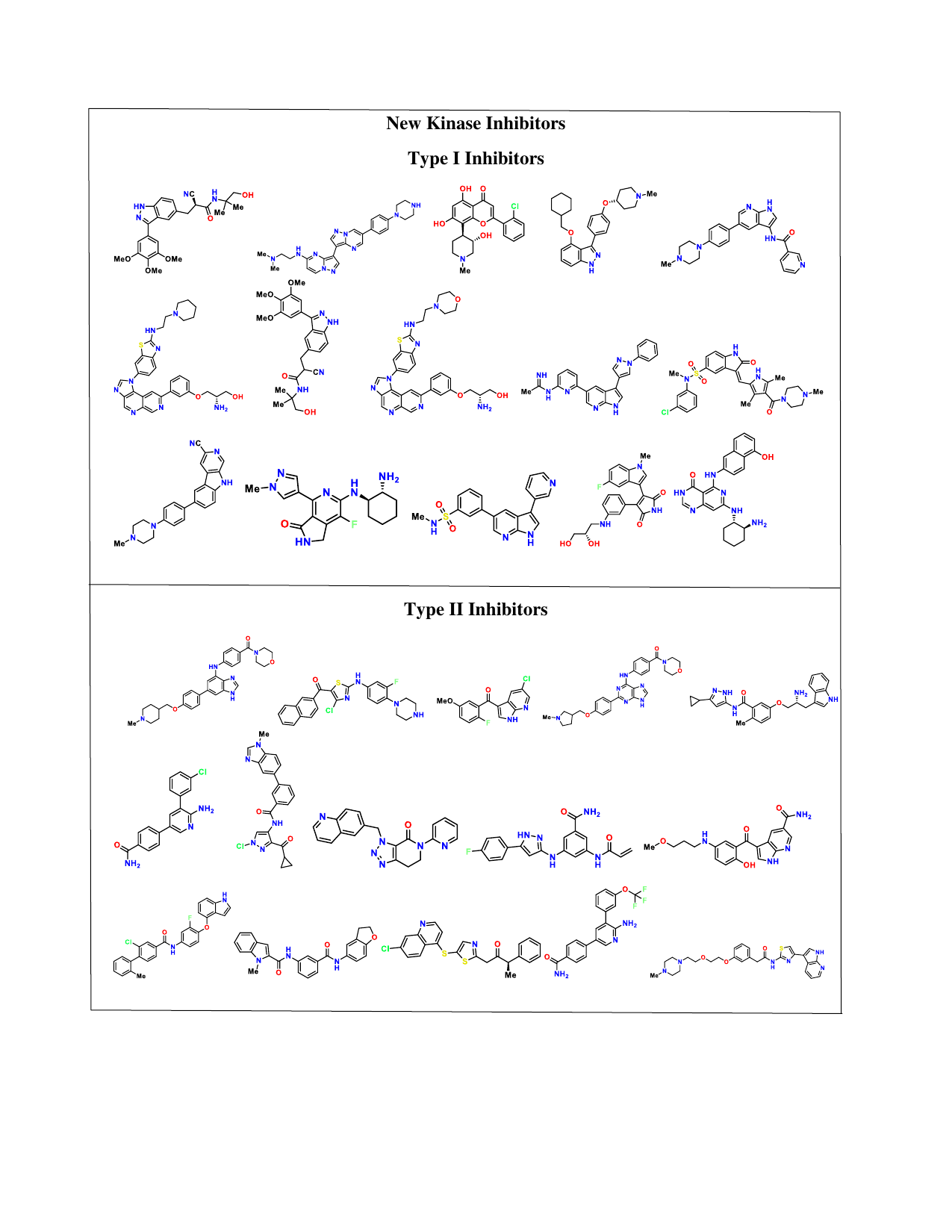}
    \caption{A total of 2500 molecules were generated using our method for each class of the Kinase dataset. In silico studies on these compounds identified several potential top hits. This figure presents the structures of a few representative examples for Type I and Type II classes.}
    \label{fig:SI_kinase_1}
\end{figure}

\begin{figure}[H]
    \centering
    \includegraphics[width=1.0\linewidth]{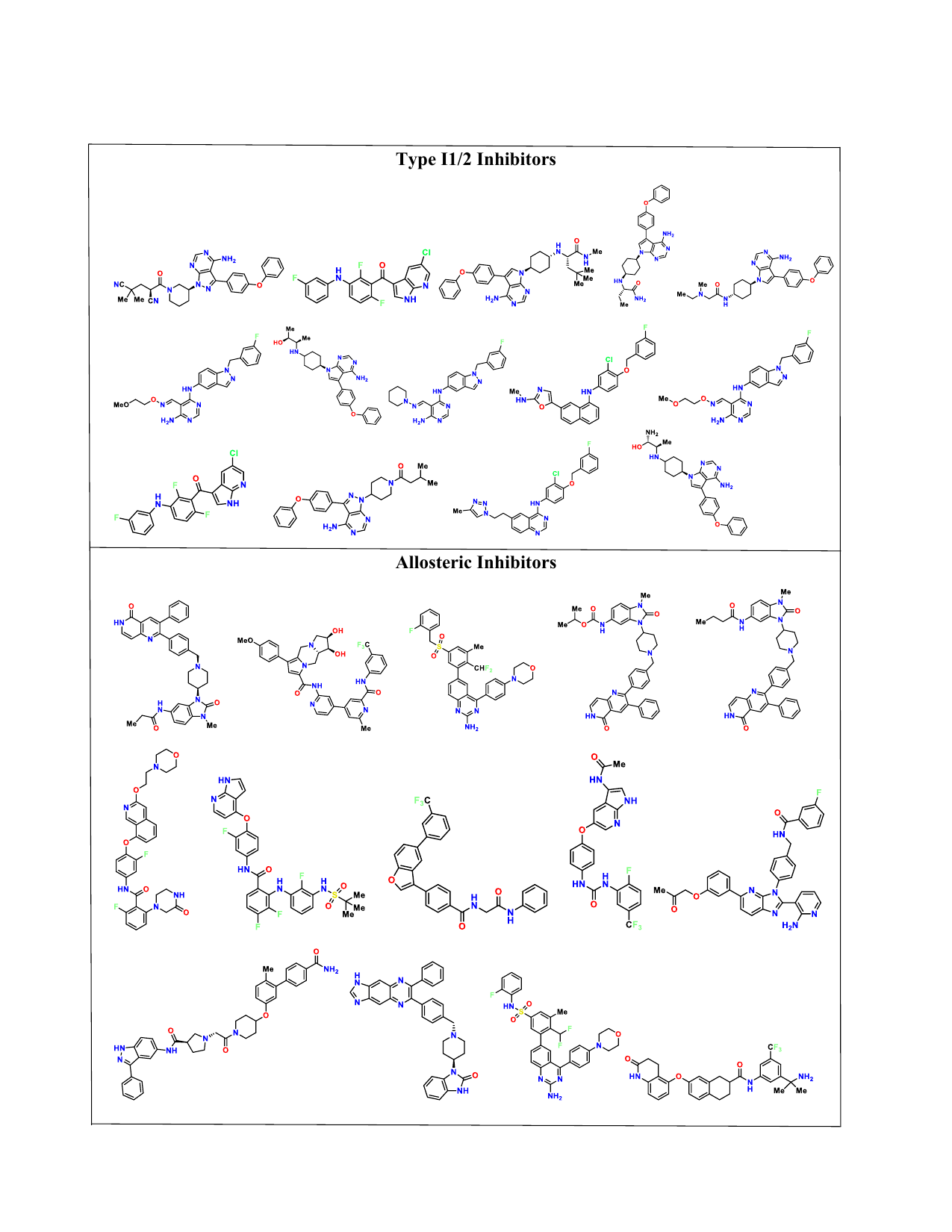}
    \caption{This figure presents the structures of a few representative examples for Type I\nicefrac{1}{2} and Allosteric inhibitors}
    \label{fig:SI_kinase_2}
\end{figure}

\subsection*{Model details}
\paragraph{Encoder Architecture:}
The encoder architecture is built upon a pre-trained ChemBERTa transformer model, which is used to generate rich molecular representations from SMILES strings. For each input SMILES, we extract the output corresponding to the [CLS] token from ChemBERTa's last hidden state, yielding a 768-dimensional embedding vector for the molecule. This base embedding is then passed through a non-linear projection head designed for contrastive learning. The projection head consists of a sequential module with two fully connected layers, (768 dimensions each) separated by an Exponential Linear Unit (ELU) activation function. This final projected embedding is used to structure the latent space according to molecular properties.

\paragraph{Decoder Architecture:}
We built a GRU-based decoder to convert latent vectors back into SMILES strings. The decoder included:
\begin{itemize}[topsep=1pt, itemsep=1pt, leftmargin=15pt]
    \item A linear layer to initialize the GRU hidden state.
    \item A character embedding layer for SMILES tokens.
    \item Three GRU layers with 512 hidden units and dropout.
    \item A final layer to output token probabilities.
\end{itemize}
We trained the decoder using teacher forcing and cross-entropy loss for 1000 epochs with the Adam optimizer (learning rate: 0.001).

\paragraph{Latent Sampling and SMILES Generation:}
After training the GMM, we sampled new latent vectors from each component. These vectors were decoded into SMILES strings using the GRU decoder with temperature-controlled softmax sampling. We used RDKit to remove invalid and duplicate molecules, keeping only unique and valid outputs. 

\subsection*{Other Downstream Tasks}
While contrastive SMILES embeddings can be used for a variety of tasks such as scaffold hopping, molecule classification and anomaly detection, we chose to focus on regression, i.e. prediction of $log(\text{IC}_{50})$ values as a critical downstream application. To rigorously assess the impact of contrastive learning, we conducted a comprehensive comparison between base ChemBERTa embeddings and our contrastively trained embeddings across seven different regression algorithms. To ensure robustness and statistical validity, each experiment was repeated over 50 independent random seeds. For each seed, we performed an $80\% - 20\%$ train-test split, maintaining identical data partitioning for both embedding types to enable direct performance comparisons. For the experiment with contrastive learning, the ChemBERTa model was independently trained for $1000$ epochs for each seed.

\begin{figure}[H]
  \centering
  \includegraphics[width=0.85\textwidth]{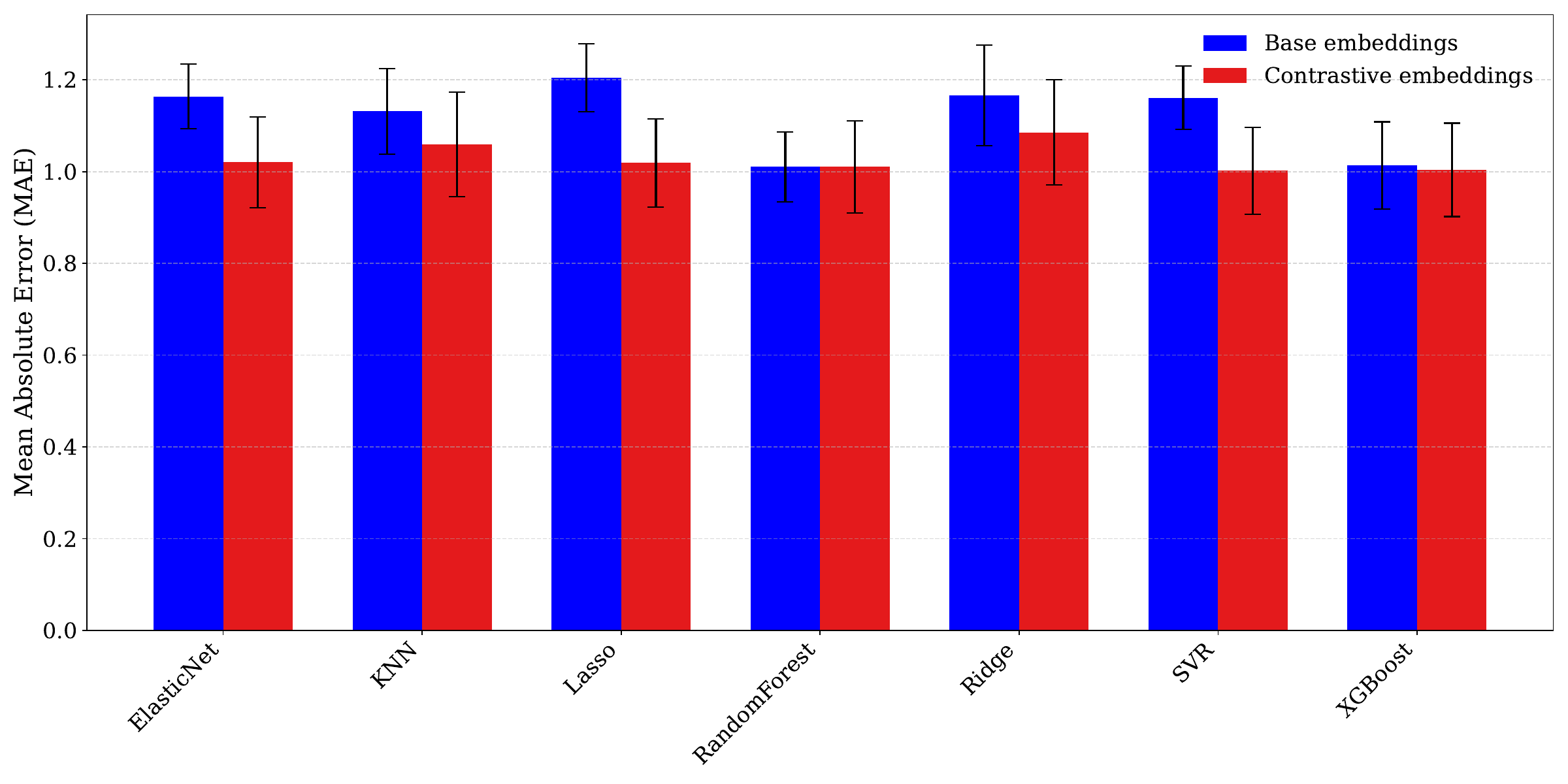}
  \caption{\textbf{MAE comparison across 7 regressors.}
    For each model, blue bars show the mean absolute error (MAE) using base embeddings,
    and red bars show MAE using contrastive embeddings; error bars denote one standard
    deviation over 50 seeds.}
  \label{fig:mae_comparison}
\end{figure}

\begin{table}[ht]
\centering
\caption{\textbf{Statistical analysis of MAE reductions.}
Paired two‐tailed $t$-tests and Cohen's $d$ were computed over 50 seeds to assess the
significance and effect size of MAE changes when switching from base to contrastive embeddings. Best values in each column are in \textbf{bold}.}
\label{tab:regression_stats}
\small
\begin{tabular}{lccccc}
\toprule
Model          & Avg. MAE\(_{\rm Base}\) $\downarrow$ & Avg. MAE\(_{\rm Contr.}\) $\downarrow$ & Avg. \(\Delta R^2 = (\Delta R_{\rm Contr.}^2 - \Delta R_{\rm Base}^2) \) $\uparrow$    & \(p\)-value      & Cohen’s \(d\)   \\
\midrule
Lasso          & 1.204654           & \textbf{1.019239}                  & \textbf{0.139012} & $<$1e–4 & {1.5088} \\
SVR            & 1.161183           & \textbf{1.001840}                  & \textbf{0.138585}          & $<$1e–4 & 1.5035          \\
ElasticNet     & 1.164035           & \textbf{1.020403}                  &\textbf{0.082211}          & $<$1e–4 & 1.2192          \\
Ridge          & 1.165921           & \textbf{1.085619}                  & \textbf{0.118779}          & 6.6e–5           & 0.6169          \\
k-NN           & 1.131383           & \textbf{1.059175}                  & \textbf{0.046120}          & 2.8e–4           & 0.5530          \\
XGBoost        & 1.013397           & \textbf{1.003876}         & –0.038543         & 0.5280           & 0.0899          \\
Random Forest  & \textbf{1.010385}  & 1.010401                  & –0.100013         & 0.9991           & –0.0002         \\
\bottomrule
\end{tabular}
\end{table}

Our results (Fig.~\ref{fig:mae_comparison}, Table~\ref{tab:regression_stats}) show that contrastive learning can substantially enhance prediction accuracy without requiring complex downstream models. Most notably, linear models and support vector regression exhibited the most substantial benefits. The Lasso regressor achieved the greatest improvement with a $15.4\%$ reduction in MAE (from $1.20$ to $1.02$) and a corresponding $R^2$ improvement of $0.139$, with a large effect size (Cohen's $d = 1.51$). Similarly, Support vector Regression showed a $13.7\%$ reduction in MAE with comparable $R^2$ gains. ElasticNet, Ridge, and k-NN models also demonstrated statistically significant improvements (all $p < 0.001$), although with progressively smaller effect sizes. Interestingly, tree-based ensemble methods (Random Forest and XGBoost) did not benefit from contrastive embeddings, showing no statistically significant differences in performance. These findings underscore that contrastive SMILES training is especially valuable for models with limited built-in nonlinearity or feature selection, while very expressive learners achieve near‐optimal performance regardless. Such insights are consistent with work such as \cite{banerjee2023surprisal} where diverse set of regression methods were benchmarked against a large collection of datasets. Such insights will guide future work on matching embedding strategies to downstream model complexity across diverse cheminformatics tasks.

\end{document}